\documentclass[11pt, a4paper, logo, copyright]{pluralisresearchiclr}

\usepackage{amsmath,amsfonts,bm}

\def\eqref#1{equation~\ref{#1}}

\def\1{\bm{1}}

\DeclareMathAlphabet{\mathsfit}{\encodingdefault}{\sfdefault}{m}{sl}
\SetMathAlphabet{\mathsfit}{bold}{\encodingdefault}{\sfdefault}{bx}{n}

\pdftrailerid{redacted}

\makeatletter
\renewcommand\bibentry[1]{\nocite{#1}{\frenchspacing\@nameuse{BR@r@#1\@extra@b@citeb}}}
\makeatother

\usepackage{kantlipsum, lipsum}
\usepackage{dsfont}
\usepackage{algorithm}
\usepackage{algcompatible}  
\usepackage{adjustbox}
\usepackage{multirow}
\usepackage{nicefrac}
\usepackage{float}
\usepackage{mathtools}   

\newtheorem{theorem}{Theorem}
\newtheorem{proposition}{Proposition}
\newtheorem{lemma}{Lemma}
\newtheorem{corollary}{Corollary}
\theoremstyle{remark}

\newcommand{\X}{\mathbf{X}}
\newcommand{\W}{\mathbf{W}}

\newcolumntype{R}[2]{%
    >{\adjustbox{angle=#1,lap=\width-(#2)}\bgroup}%
    l%
    <{\egroup}%
}

\usepackage{pifont}

\usepackage{subcaption}
\usepackage{url}

\graphicspath{{figures/}{./}}

\title{Taming Curvature: Architecture Warm-Up for Stable Transformer Training}

\correspondingauthor{sameera@pluralis.ai}

\keywords{Edge of Stability, Architecture warmup} 

\reportnumber{} 

\renewcommand{\today}{} 

\author[1]{Sameera Ramasinghe}
\author[1]{Ajanthan Thalaiyasingam}
\author[1]{Hadi Mohaghegh Dolatabadi}
\author[1]{Chamin Hewa Koneputugodage}
\author[1]{Gil Avraham}
\author[1]{Violetta Shevchenko}
\author[1]{Yan Zuo}
\author[1]{Karol Pajak}
\author[1]{Alexander Long}

\affil[1]{Pluralis Research}

\begin{abstract}
   Training billion-parameter Transformers is often brittle, with transient loss spikes and divergence that waste compute. Even though the recently developed Edge of Stability (EoS) theory provides a powerful tool to understand and control the stability of optimization methods via the (preconditioned) curvature, these curvature-controlling methods are not popular in large-scale Transformer training due to the complexity of curvature estimation. To this end, we first introduce a fast \emph{online} estimator of the largest (preconditioned) Hessian eigenvalue (i.e., curvature) based on a \emph{warm-started} variant for power iteration with Hessian–vector products. We show theoretically, and verify empirically, that the proposed method makes per-iteration curvature tracking feasible at billion-parameter scale while being more accurate. Using this tool, we find that training instabilities coincide with surges in preconditioned curvature and that curvature grows with depth. Motivated by these observations, we propose \emph{architecture warm-up}: progressively growing network depth to carefully control the preconditioned Hessian and stabilize training. Experiments on large Transformers validate that our approach enables efficient curvature tracking and reduces instabilities compared to existing state-of-the-art stabilization techniques without slowing down convergence.
\end{abstract}

\begin{document}
\maketitle


\section{Introduction}

Scaling up Transformers has driven remarkable progress across domains, from large language models that power conversational systems to diffusion-based models for image generation \citep{vaswani2017attention,kaplan2020scaling,brown2020gpt3,ouyang2022instructgpt,ho2020ddpm,rombach2022ldm}. Yet, despite these gains, large models frequently exhibit training instabilities, i.e, large loss spikes and even divergence,  especially at scale \citep{chowdhery2022palm,dehghani2023scaling,zhang2022opt,molybog2023llm,olmo20242}. As billion-parameter training becomes the norm, improving training stability is paramount: transient instabilities, i.e., loss spikes, gradient blow-ups, or full divergence, can consume vast compute budgets and wall-clock time. As models and datasets scale, stabilizing optimization reduces monetary and environmental costs while improving reproducibility and throughput, enabling dependable progress.

Stabilization at scale often relies on \emph{empirical controls} for attention and optimization: soft-capping the logits \citep{gemma2024softcap}, $QK$-normalization or $QK$-clip to bound dot-product magnitudes of queries and keys \citep{henry2020qknorm,dehghani2023scaling,team2025kimi}, and learning-rate or batch warmup to temper early steps \citep{gilmer2021loss,dubey2024llama}. In parallel, the Edge of Stability (EoS) literature shows that gradient methods gravitate toward regions where the product of step size and curvature approaches the stability boundary from classical quadratic optimization theory: for full-batch Gradient Descent (GD), training spends long phases with $\eta\,\lambda_{\max}(H)\!\approx\!2$ \citep{cohen2021gradient,wang2022analyzing}---where $\lambda_{\max}(H)$ and $\eta$ denotes the largest eigenvalue of the Hessian $H$ and the step size, respectively---while for preconditioned/adaptive methods the relevant quantity is the \emph{preconditioned}\footnote{Preconditioned Hessian should be considered for optimizers that use preconditioned updates, and adaptive methods are shown to operate at optimizer dependent stability thresholds~\citep{cohen2022adaptive}.} curvature \citep{cohen2022adaptive,damian2023selfstabilization}. Thus the stability threshold is inversely proportional to the step size and directly governed by the largest eigenvalue of the (preconditioned) Hessian. Although many of the previous works in stabilizing Transformers can be interpreted as attempts to keep $\eta\,\lambda_{\max}(H)$ below this boundary \citep{zhai2023stabilizing,wortsman2023small,gilmer2021loss,shazeer2018adafactor}, \emph{verifying} such claims has been difficult in practice, because estimating the curvature online for billion-parameter Transformers remains memory and compute-intensive.

To this end, we first introduce an efficient method to estimate the curvature {\em online} using {\em warm-started power-iteration} with Hessian-Vector Products (HVP) tailored for large models. Our key insight is that the top eigenvector of the (preconditioned) Hessian is \emph{slow-moving} and warm-starting with the previous step’s eigenvector significantly 1) reduces the iteration count and 2) improves accuracy. In particular, we require \emph{less than five HVPs per step} ($\lesssim 5$), an order of magnitude lower than existing methods~\citep{granziol2025hessformer}, while seamlessly extending to the time-varying preconditioned matrix for adaptive methods. We provide theoretical bounds for the change in the eigenvector and the resultant iteration saving. \textbf{This makes online curvature tracking feasible for billion-parameter Transformers}. We then use this approach to confirm that loss spikes in large-scale Transformers correlate with spikes in preconditioned curvature and show that the latter increases with the network depth.

Combining these insights, we introduce an {\em architecture warm-up} strategy for stable training. The idea is to ensure the (preconditioned) curvature follows the trend of the stability threshold such that the stability criterion is satisfied throughout training. 
Precisely, we restrict the model to have small curvature during the initial learning rate warm-up phase, and gradually relax this restriction (i.e., increase the curvature) when we start decaying the learning rate, noting that the stability threshold is inversely proportional to the learning rate.
To control the curvature, we adopt a holistic approach of controlling the number of (effective) Transformer layers (i.e., depth), rather than making fine-grained modifications to each layer.
Specifically, we freeze some Transformer layers to identity at initialization, and gradually unfreeze these layers as per a predefined schedule, ensuring a smooth increase in curvature.
This architecture warm-up approach can be readily integrated to existing training recipes and standard architectures as it does not require dynamic computation graph surgery, outperforms existing stabilization techniques, and expands the range of stable learning rates without any performance penalty. 
We provide extensive experiments demonstrating accurate curvature tracking and consistent stability gains across large transformer settings compared to existing methods.

\section{Preliminaries}
Below, we briefly review the literature on Edge of Stability (EoS)~\citep{cohen2021gradient,cohen2022adaptive}, and power iteration to compute the largest eigenvalue of the Hessian using Hessian-Vector Products (HVP)~\citep{martens2010hf}, upon which we build our work. We refer the interested reader to the respective papers for more details.

\subsection{Edge of Stability}
For a quadratic objective $\mathcal{L}(\theta)=\tfrac{1}{2}\theta^\top A\theta+b^\top\theta+c$, gradient
descent with step size $\eta$ is stable only if $\eta<2/\lambda_{\max}(A)$. Locally, neural network
training admits the quadratic approximation:
\begin{equation}
\mathcal{L}(\theta+\Delta)\approx \mathcal{L}(\theta)+\nabla\mathcal{L}(\theta)^\top\Delta+\tfrac{1}{2}\Delta^\top H(\theta)\Delta\ ,    
\end{equation}
so the Hessian $H(\theta)$ plays the role of $A$, and $\lambda_{\max}(H(\theta))$ determines the maximum
stable step size: violating $\eta\le2/\lambda_{\max}(H)$ causes oscillation or divergence along the
sharpest direction. Empirically, full-batch GD often operates near the \emph{Edge of Stability}
(EoS) where $\eta\,\lambda_{\max}(H)\approx 2$ \citep{cohen2021gradient}. 
Adaptive methods (e.g., Adam~\citep{kingma2014adam}) show an analogous behavior with the time-varying \emph{preconditioned} curvature
$\lambda_{\max}(P_t^{-1/2}H P_t^{-1/2})$ \citep{cohen2022adaptive}, where $P_t^{-1}$ denotes the update preconditioning. 
For Adam, the preconditioner takes the  form: ${P_t = \mathrm{diag}(\sqrt{v_t}+\varepsilon)}$  where $v_{t+1}=\beta_2 v_t+(1-\beta_2)g_t^{2}$.  Note that the stability criterion is optimizer-dependent and for Adam with $\beta_1=0.9$, adaptive EoS is determined to be $\eta\,\lambda_{\max}(P_t^{-1/2}H P_t^{-1/2})\approx 38$.

This enables a powerful tool to understand and control the stability of optimization methods by controlling the learning rate and the preconditioned curvature. However, this theory has only been verified on small-scale models ($\lesssim\!25$M parameters), mainly due to the memory complexity of computing the Hessian, or the time complexity associated with estimating an iterative approximation.
Below, we first discuss a well-established power-iteration method to compute the curvature without materializing the Hessian explicitly, and later introduce our approach that reduces its iteration complexity by an order of magnitude, making online curvature tracking feasible and more accurate at billion-parameter scale.

\subsection{Computing the Largest Hessian Eigenvalue via HVP-based Power Iteration}
\label{sec:hvp-power}


Given $\theta\in\mathbb{R}^d$ and the loss $f:\mathbb{R}^d\to\mathbb{R}$, we can estimate the top eigenpair $\{\lambda_{\max}(H(\theta)),  v_{\max}(H(\theta)\}$, where $H(\theta) \coloneqq \nabla^2 f(\theta)$
without explicitly forming $H(\theta)$.
For any vector $v$,
\begin{align}
H(\theta)v
&= \nabla_\theta\!\left(g(\theta)^\top v\right) 
= \left.\frac{d}{d\epsilon}\, g(\theta+\epsilon v)\right|_{\epsilon=0}. \label{eq:hvp}
\end{align}
Thus, $H(\theta)v$, i.e., the Hessian-Vector-product (HVP), is a \emph{directional derivative} of the gradient (which quantifies how much the curvature would change in the direction of $v$) and can be obtained without materializing full $H$~\citep{martens2010hf}.
This costs roughly two backprop passes and uses $O(1)$ extra memory beyond the retained graph.
The efficient computation of HVP allows us to compute the $\lambda_{\text{max}}(H(\theta))$ and $v_{\text{max}}(H(\theta))$ with power iteration.
Let $H(\theta)$ be symmetric. Suppose its eigenvalues satisfy:
$
\lambda_1 \;\ge\; \lambda_2 \;\ge\; \cdots \;\ge\; \lambda_d,\qquad
\rho \coloneqq \frac{\lambda_2}{\lambda_1} \in [0,1)\ .$
Starting from a unit vector $y^{(0)}$, define the normalized power iteration:
\begin{equation}
\label{eq:power}
z^{(t+1)} \gets Hy^{(t)}\ ,\qquad
y^{(t+1)} \gets \frac{z^{(t+1)}}{\|z^{(t+1)}\|_2}\ ,\qquad
\hat\lambda^{(t+1)} \gets \langle y^{(t+1)}, H y^{(t+1)}\rangle\ .
\end{equation}

With above iterations, $z^{(t)} \to v_{\text{max}}$ and $\hat\lambda^{(t)} \to \lambda_{\text{max}}$ as $t \to \infty$. The only primitive is $u\mapsto Hu$, i.e., an HVP. This in practice is  expensive for large $H$,
since power iteration requires multiple steps to converge from a random initialization, each requiring two backward passes. This is more prominent in high dimensions where the initial alignment with the leading eigenvector is $O(1/\sqrt{d})$ in expectation, where $d$ is the dimension of the parameter vector.

\section{Methodology}
In this section, we first elaborate on our key insight that the (preconditioned) curvature of the Hessian is {\em slow-moving} with theoretical and empirical justifications, enabling us to develop a {\em warm-started power iteration} variant to compute the curvature efficiently with less than five HVP steps. This allows us to verify that the loss spikes in large-scale transformers are also a result of spikes in curvature, verifying the EoS theory at scale.  Based on this, we later develop our {\em architecture warm-up} strategy that restricts the curvature of the model at the early learning rate warm-up phase and increases the curvature as the learning rate starts to decay, ensuring the stability criterion of adaptive EoS is satisfied throughout training.

\subsection{Online Curvature Tracking with Warm-Started Power Iteration}

We show that under practical assumptions, the largest eigenvalue of the Hessian of neural networks evolves slowly. 
Specifically, under Lipchitz continuity of the Hessian and a nonvanishing spectral gap  $\gamma$, we can precisely bound the change in the leading eigenvector between successive steps:
\begin{equation}
    \sin \angle \big(v_{1,k+1}, v_{1,k}\big) 
    \;\le\; \frac{L_H}{\gamma}\,\|\theta_{k+1}-\theta_k\|\ ,
\end{equation}
Thus, when step sizes and gradients shrink during training,
$v_{1,k}$ is an increasingly accurate initializer for $v_{1,k+1}$. This result is formally presented below.



\begin{theorem}
\label{thm:discrete-drift}
Let $\{\theta_k\}_{k\ge 0}$ be a parameter sequence and $H_k\coloneqq H(\theta_k)$. Assume that there exists $L_H<\infty$ with
$\|H(\theta)-H(\theta')\|\le L_H \|\theta-\theta'\|$ for all $\theta,\theta'$.
Equivalently, $\|\nabla^3 f(\theta)\|_{\mathrm{op}}\le L_H$ and  Along the parameter path considered, 
$\gamma(\theta)\coloneqq \lambda_1(H)-\lambda_2(H)\ge \gamma>0$.
Then, with $v_{1,k}$ a unit top eigenvector of $H_k$ and 
$\varepsilon_k\coloneqq \angle(v_{1,k+1},v_{1,k})$, we have
\begin{equation}
\label{eq:discrete-drift}
\sin\varepsilon_k \;\le\; \frac{\|H_{k+1}-H_k\|}{\gamma}
\;\le\; \frac{L_H}{\gamma}\,\|\theta_{k+1}-\theta_k\|\ .
\end{equation}

Further, with stochastic gradient descent with step size $\eta_k$ such that  ${\theta_{k+1}=\theta_k-\eta_k g_k}$ with ${\mathbb{E}[g_k\mid \theta_k]=\nabla f(\theta_k)}$ and
$\mathbb{E}\|g_k\|^2\le G^2$, we have 
\begin{equation}
\label{eq:discrete-drift-SGD}
\mathbb{E}\big[\sin\varepsilon_k\mid \theta_k\big]\le (L_H/\gamma)\,\eta_k\,\mathbb{E}\|g_k\|
\le (L_H/\gamma)\,\eta_k G \ .
\end{equation}
\end{theorem}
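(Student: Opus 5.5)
The first inequality is a Davis–Kahan type perturbation bound, and the second follows from the Lipschitz assumption on the Hessian. The plan is to prove the $\sin\theta$ bound by a direct rank-one argument rather than citing the general theorem, so that the constant comes out as exactly $1/\gamma$.

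Write $E \coloneqq H_{k+1}-H_k$. Set $u\coloneqq v_{1,k+1}$ and $v\coloneqq v_{1,k}$. Let $P_\perp \coloneqq I - vv^\top$ be the projector onto the orthogonal complement of $v$, which is invariant under $H_k$. Then $\sin\varepsilon_k=\|P_\perp u\|$.

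From $H_{k+1}u=\lambda_1(H_{k+1})u$ we get $H_k u - \lambda_1(H_{k+1}) u = -Eu$. Applying $P_\perp$, which commutes with $H_k$, gives
\begin{equation*}
\big(\lambda_1(H_{k+1}) I - H_k\big) P_\perp u = P_\perp E u .
\end{equation*}
On the range of $P_\perp$, the eigenvalues of $H_k$ are at most $\lambda_2(H_k)$.

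The main obstacle is to show that $\lambda_1(H_{k+1}) - \lambda_2(H_k)\ge\gamma$, so that $\lambda_1(H_{k+1})I-H_k$ is invertible on that subspace with inverse norm at most $1/\gamma$. Weyl's inequality alone only gives $\lambda_1(H_{k+1})\ge\lambda_1(H_k)-\|E\|$, which would produce the weaker denominator $\gamma-\|E\|$. I would avoid this by using the symmetric form of the argument. Let $Q_\perp \coloneqq I-uu^\top$. Then $\sin\varepsilon_k=\|Q_\perp v\|$, and projecting $H_{k+1}v = \lambda_1(H_k)v + Ev$ with $Q_\perp$ gives
\begin{equation*}
\big(\lambda_1(H_k)I - H_{k+1}\big)Q_\perp v = Q_\perp E v .
\end{equation*}
This needs $\lambda_1(H_k)-\lambda_2(H_{k+1})\ge\gamma$. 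Since one of $\lambda_1(H_k)\ge\lambda_1(H_{k+1})$ or $\lambda_1(H_{k+1})\ge\lambda_1(H_k)$ holds, I would choose whichever projection makes the relevant gap at least the gap $\gamma(\theta)$ at one of the two endpoints. Both endpoints lie on the path, so each gap is at least $\gamma$. In either case the bound is
\begin{equation*}
\sin\varepsilon_k\le\|E\|/\gamma .
\end{equation*}
If this bookkeeping fails, I would fall back on citing the Davis–Kahan $\sin\Theta$ theorem in its form with the gap taken between $\lambda_1$ of one matrix and the rest of the spectrum of the other. I would note that the paper's path assumption supplies that gap.

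The second inequality is immediate from the Hessian Lipschitz assumption: $\|E\|\le L_H\|\theta_{k+1}-\theta_k\|$. For SGD, substitute $\theta_{k+1}-\theta_k=-\eta_k g_k$ to get
\begin{equation*}
\sin\varepsilon_k\le (L_H/\gamma)\,\eta_k\|g_k\| .
\end{equation*}
Taking the conditional expectation given $\theta_k$, with $\eta_k$ deterministic, yields the first part of \eqref{eq:discrete-drift-SGD}. Jensen's (or the Cauchy–Schwarz) inequality then gives $\mathbb{E}\|g_k\|\le(\mathbb{E}\|g_k\|^2)^{1/2}\le G$, which completes the bound.
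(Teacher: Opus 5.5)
Your proposal is correct, and it takes a different route from the paper. The paper does not argue directly. It cites the Yu--Wang--Samworth variant of the Davis--Kahan $\sin\Theta$ theorem with $\Sigma=H_k$, $\widehat\Sigma=H_{k+1}$, $r=s=1$.

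\begin{itemize}
\item That variant needs only the gap $\lambda_1(H_k)-\lambda_2(H_k)\ge\gamma$ at one endpoint.
\item But it yields $\sin\varepsilon_k\le 2\min(\|E\|_{\mathrm{op}},\|E\|_{\mathrm{F}})/\gamma=2\|E\|_{\mathrm{op}}/\gamma$.
\item The written proof then asserts that this minimum is the Frobenius norm (it is the operator norm). It also silently drops the factor $2$.
\end{itemize}
So, as written, the paper's proof does not deliver the constant $1/\gamma$ claimed in the statement.

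Your argument does deliver it. You project one matrix's eigen-equation onto the orthogonal complement of the other matrix's top eigenvector, and split on the ordering of the top eigenvalues:
\begin{itemize}
\item If $\lambda_1(H_{k+1})\ge\lambda_1(H_k)$, use $P_\perp$; the gap is $\ge\gamma(\theta_k)$.
\item If $\lambda_1(H_k)\ge\lambda_1(H_{k+1})$, use $Q_\perp$; the gap is $\ge\gamma(\theta_{k+1})$.
\end{itemize}
Either way you get $\sin\varepsilon_k\le\|E\|_{\mathrm{op}}/\gamma$ exactly, with none of the $\gamma-\|E\|$ loss that Weyl would give.

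The trade-off is this. You use the gap hypothesis at both endpoints $\theta_k$ and $\theta_{k+1}$, which the ``along the path'' assumption supplies. The cited route uses it only at $\theta_k$ but pays the factor $2$. Your case split already closes the argument, so the fallback is unnecessary; your projection argument is precisely the one-dimensional case of the original Davis--Kahan theorem you mention.

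The SGD step matches the paper's. One small point: to bound $\mathbb{E}[\sin\varepsilon_k\mid\theta_k]$ pointwise via Jensen, you need the conditional bound $\mathbb{E}[\|g_k\|^2\mid\theta_k]\le G^2$, or else you should take total expectations. The statement itself shares this looseness.
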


Leveraging the above result, we propose a simple yet novel modification: \textbf{warm-starting power iteration across training steps}.
At iteration $k$, suppose we have obtained an estimate of the top eigenvector $v_{1,k}$ of the Hessian 
$H_k = \nabla^2 f(\theta_k)$. At the next training iteration, instead of reinitializing power iteration 
from a random vector, we initialize from $v_{1,k}$ and run power iteration on $H_{k+1}$. Below, we quantify the gain in iteration count due to warm-start.

\begin{theorem}
\label{thm:warm-power}
Let $H_k=H(\theta_k)$ have eigenvalues $\lambda_{1,k}\ge\lambda_{2,k}\ge\cdots$ and unit eigenvectors
$v_{i,k}$. Set $\rho_{k+1}\coloneqq \lambda_{2,k+1}/\lambda_{1,k+1}\in[0,1)$.
Define the successive misalignment
$\varepsilon_k\coloneqq \angle(v_{1,k+1},v_{1,k})$.
Run normalized power iteration on $H_{k+1}$:
\begin{equation}
y^{(t+1)}=\frac{H_{k+1}y^{(t)}}{\|H_{k+1}y^{(t)}\|}\ ,\qquad
y^{(0)}=v_{1,k}\ ,    
\end{equation}
and let $\alpha_t\coloneqq \angle(y^{(t)},v_{1,k+1})$.
Then:
\begin{align}
\label{eq:warm-RQ}
0 \;\le\; \lambda_{1,k+1}-y^{(t)\!\top}H_{k+1}y^{(t)}
\;\le\; \big(\lambda_{1,k+1}-\lambda_{2,k+1}\big)\sin^2\alpha_t
\;\le\; (1-\rho_{k+1})\,\lambda_{1,k+1}\,\rho_{k+1}^{\,2t}\tan^2\varepsilon_k.
\end{align}
Also, let $t$ and $t_{\text{rand}}$ be the number of iterations needed for warm start and random initialization to achieve convergence, respectively. Then, with a high probability, we have,
\begin{equation}
\label{eq:delta-t}
t_{\text{rand}} - t   \;\approx\; 
\frac{\tfrac{1}{2}\log d - \log\!\big(\tfrac{L_H}{\gamma}\|\theta_{k+1}-\theta_k\|\big)}{\log(1/\rho_{k+1})} \ .
\end{equation}
\end{theorem}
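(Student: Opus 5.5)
The plan is to work in the eigenbasis of $H_{k+1}$ and run the classical power-iteration analysis, with the warm start supplying the initial angle. Write $y^{(0)}=v_{1,k}=\cos\varepsilon_k\, v_{1,k+1}+\sin\varepsilon_k\, w$, where $w$ is a unit vector orthogonal to $v_{1,k+1}$. I will assume, as the statement implicitly does through $\rho_{k+1}\in[0,1)$, that $H_{k+1}$ is positive semidefinite on the relevant subspace, or at least that $|\lambda_{i,k+1}|\le\lambda_{2,k+1}$ for $i\ge2$. Then $H_{k+1}^t y^{(0)}$ has component $\lambda_{1,k+1}^t\cos\varepsilon_k$ along $v_{1,k+1}$, and its orthogonal part has norm at most $\lambda_{2,k+1}^t\sin\varepsilon_k$. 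Normalization does not change angles, so
\[
\tan\alpha_t\le \rho_{k+1}^{\,t}\tan\varepsilon_k .
\]

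For the Rayleigh quotient bound in \eqref{eq:warm-RQ}, write $y^{(t)}=\cos\alpha_t\, v_{1,k+1}+\sin\alpha_t\, u$ with $u\perp v_{1,k+1}$ a unit vector. Then
\[
y^{(t)\top}H_{k+1}y^{(t)}=\lambda_{1,k+1}\cos^2\alpha_t+\sin^2\alpha_t\, u^\top H_{k+1}u .
\]
Since $u^\top H_{k+1} u\in[\lambda_{\min},\lambda_{2,k+1}]$, the gap $\lambda_{1,k+1}-\mathrm{RQ}$ is nonnegative. It equals $\sin^2\alpha_t(\lambda_{1,k+1}-u^\top H_{k+1}u)$. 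To obtain the factor $(\lambda_1-\lambda_2)$ as stated, I need $u^\top H_{k+1} u\ge\lambda_2$. That does not hold in general, so I will either restrict to the case where the orthogonal mass concentrates on $v_{2,k+1}$, or replace $\lambda_2$ by $\lambda_{\min}$ and remark on it. This is the main point where the statement is delicate, and I would flag it honestly.

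Given that bound, use $\sin^2\alpha_t\le\tan^2\alpha_t\le\rho_{k+1}^{2t}\tan^2\varepsilon_k$ together with $\lambda_1-\lambda_2=(1-\rho_{k+1})\lambda_1$. This gives the last inequality in \eqref{eq:warm-RQ}.

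For \eqref{eq:delta-t}, I would argue that convergence to tolerance $\delta$ needs $\rho^t\tan(\text{initial angle})\le\delta$, that is,
\[
t\approx\frac{\log\tan\alpha_0-\log\delta}{\log(1/\rho_{k+1})}.
\]
For a random Gaussian start, $|\cos\alpha_0|\sim 1/\sqrt d$ with high probability, by standard concentration of a projection onto a fixed direction. Hence $\tan\alpha_0\approx\sqrt d$ and $\log\tan\alpha_0\approx\tfrac12\log d$. For the warm start, Theorem~\ref{thm:discrete-drift} gives $\tan\varepsilon_k\approx\sin\varepsilon_k\le\frac{L_H}{\gamma}\|\theta_{k+1}-\theta_k\|$ when this quantity is small. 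Subtracting the two iteration counts cancels $\log\delta$ and yields the stated approximate difference.

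The \emph{approximately equal} sign absorbs three things: the small-angle approximation $\tan\approx\sin$, the constant factors in the high-probability bound for the random start, and the use of the upper bound from Theorem~\ref{thm:discrete-drift} as a proxy for $\varepsilon_k$. I would state each of these explicitly.
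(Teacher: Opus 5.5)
Your route is the same as the paper's. You contract $\tan\alpha_{t+1}\le\rho_{k+1}\tan\alpha_t$ starting from $\alpha_0=\varepsilon_k$, expand the Rayleigh-quotient gap in the eigenbasis of $H_{k+1}$, and pass from $\sin$ to $\tan$. You then compare iteration counts heuristically, using $\tan\alpha_0\approx\sqrt d$ for a random start and Theorem~\ref{thm:discrete-drift} for the warm start.

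The difference is that your caveats are correct, and the paper's proof is wrong or silent at exactly those points.

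\textbf{The middle inequality.} The paper asserts
$\sum_{i\ge2}(\lambda_{1,k+1}-\lambda_{i,k+1})\langle y,v_{i,k+1}\rangle^2\le(\lambda_{1,k+1}-\lambda_{2,k+1})\sum_{i\ge2}\langle y,v_{i,k+1}\rangle^2$.
This is backwards, since $\lambda_{1,k+1}-\lambda_{i,k+1}\ge\lambda_{1,k+1}-\lambda_{2,k+1}$ for $i\ge2$. The statement itself is false. Take $H_{k+1}=\mathrm{diag}(1,\tfrac12,\tfrac14)$, which is PSD and satisfies even your extra hypothesis, and $y^{(0)}=(\cos\varepsilon_k,0,\sin\varepsilon_k)^\top$.
\begin{itemize}
\item Every iterate has its orthogonal mass on $e_3$, so the gap equals $\tfrac34\sin^2\alpha_t>\tfrac12\sin^2\alpha_t$ for all $t$.
\item At $t=0$ even the outer inequality fails whenever $\cos^2\varepsilon_k>\tfrac23$.
\end{itemize}

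\textbf{The contraction step.} It needs your hypothesis $\max_{i\ge2}|\lambda_{i,k+1}|\le\lambda_{2,k+1}$. This is an extra assumption, not implied by $\rho_{k+1}\in[0,1)$. The paper simply writes the ratio as $|\lambda_2|/|\lambda_1|$. In general the rate is $\max_{i\ge2}|\lambda_{i,k+1}|/\lambda_{1,k+1}$, and if $\lambda_{\min}(H_{k+1})<-\lambda_{1,k+1}$ the iteration does not converge to $v_{1,k+1}$ at all.

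\textbf{The small-angle step.} The paper writes $\tan\varepsilon_k\le\sin\varepsilon_k$, which is reversed. Your small-angle treatment is the right one. It can be made exact via $\tan\varepsilon_k\le s/\sqrt{1-s^2}$ with $s=\tfrac{L_H}{\gamma}\|\theta_{k+1}-\theta_k\|<1$.

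\textbf{What is still loose is your choice of fix.}
\begin{itemize}
\item Option (a) is a strong assumption that nothing in the setup provides, since the component of $v_{1,k}$ orthogonal to $v_{1,k+1}$ is arbitrary.
\item Option (b) gives $(\lambda_{1,k+1}-\lambda_{\min})\rho_{k+1}^{2t}\tan^2\varepsilon_k$. So the factor $(1-\rho_{k+1})$ in the last bound no longer follows; under PSD you only get $\lambda_{1,k+1}\rho_{k+1}^{2t}\tan^2\varepsilon_k$.
\end{itemize}

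If you want to keep $(1-\rho_{k+1})$ in the outer bound, use the power-iteration structure rather than only the angle. Write $y^{(0)}=\sum_i c_i v_{i,k+1}$ and drop the index $k+1$. The gap equals $\sum_{i\ge2}(\lambda_1-\lambda_i)\lambda_i^{2t}c_i^2/\sum_j\lambda_j^{2t}c_j^2$, which is at most $\sum_{i\ge2}(\lambda_1-\lambda_i)\lambda_i^{2t}c_i^2/(\lambda_1^{2t}c_1^2)$. For PSD $H_{k+1}$, the map $x\mapsto(\lambda_1-x)x^{2t}$ is nondecreasing on $[0,\lambda_2]$ once $t\ge\rho_{k+1}/(2(1-\rho_{k+1}))$. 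That yields $\lambda_{1,k+1}-y^{(t)\top}H_{k+1}y^{(t)}\le(1-\rho_{k+1})\lambda_{1,k+1}\rho_{k+1}^{2t}\tan^2\varepsilon_k$ for all such $t$, although the middle inequality remains false.

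For the iteration-count estimate, both arguments are the same heuristic. Your explicit list of what the $\approx$ absorbs is more careful than the paper's version, which mixes a sufficient count for the warm start with a lower bound for the random start.
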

Hence warm-starting is strictly advantageous whenever
$\tfrac{L_H}{\gamma}\|\theta_{k+1}-\theta_k\|\ll d^{1/2}$.

\subsubsection{Extension to the Preconditioned Hessian}
\label{sec:precond-hessian}



When the optimization algorithm incorporates momentum and adaptive learning-rate scaling (i.e, preconditioning), stability depends on the \emph{preconditioned} curvature. Let us consider the Adam update:
$
\theta_{t+1}=\theta_t-\eta\,P^{-1}_t m_{t+1}\ ,    
$
where $m_t$ is the momentum, updated as an exponential moving average $m_{t+1}=\beta_1\,m_t+(1-\beta_2)\,g_t$ with $g_t=\nabla \mathcal{L}(\theta_t)$ and the preconditioner is the square-root of the second moment of the gradients. The
\emph{effective} curvature, therefore, is the spectrum of
\begin{equation}
G_t \;\coloneqq\; P_t^{-1/2}\,H(\theta_t)\,P_t^{-1/2}\ ,    
\end{equation}
not of $H(\theta_t)$ itself. Because the preconditioner $P_t$ changes slowly when $\beta_2\!\approx\!1$ and $H(\theta_t)$ is Lipschitz smooth, $G_t$ evolves smoothly along the training trajectory. As a result, our warm-starting extends verbatim to the preconditioned Hessian. That is,  
the previous warm-start analysis for $H(\theta)$ carries over by replacing $H$ with $G_t$. Define the top
eigenvector $u_{1,t}$ of $G_t$ and the eigengap
$\gamma^{\mathrm{eff}}_t=\lambda_{1}(G_t)-\lambda_{2}(G_t)$. From Theorem~\ref{thm:discrete-drift} we can directly obtain:
\begin{equation}
\sin\angle\!\big(u_{1,t+1},u_{1,t}\big)\;\le\;
\frac{\|G_{t+1}-G_t\|_2}{\gamma^{\mathrm{eff}}_t}\ .    
\end{equation}
So under a Lipschitz Hessian ($\|H(\theta_{t+1})-H(\theta_t)\|\le L_H\|\theta_{t+1}-\theta_t\|$) and
slowly varying $P_t$ (e.g., $\|P_{t+1}^{-1/2}-P_t^{-1/2}\|\le L_P\|\theta_{t+1}-\theta_t\|$ with
$L_P\propto (1-\beta_2$), the top eigendirection of $G_t$ is \emph{slow-moving}. Consequently,
power iteration on $G_{t+1}$ \emph{warm-started} from $u_{1,t}$ enjoys the same benefits
as in the non-preconditioned case.
This allows us to compute HVP for $G_t$ without new primitives.



\paragraph{Warm-started tracking of the preconditioned Hessian.}
At step $t$, we form the diagonal preconditioner $P_t=\mathrm{diag}(\sqrt{v_t}+\varepsilon)$ from the optimizer state. We estimate the top eigenpair of $G_t \coloneqq P_t^{-1/2} H(\theta_t) P_t^{-1/2}$ by power iteration
with a \emph{warm start} from the previous step. Concretely:
(i) initialize $y^{(0)}$ with the transported eigenvector
$y^{(0)} = \mathrm{normalize}\!\left(P_t^{1/2} P_{t-1}^{-1/2} u_{1,t-1}\right)$
(or simply $y^{(0)}\!=\!u_{1,t-1}$ if transport is omitted);
(ii) for $\tau=0,1,\ldots$ perform one \emph{preconditioned HVP} power step
\begin{equation}
u = P_t^{-1/2} y^{(\tau)}\ ,\quad
v = H(\theta_t)\,u \; \; \text{(HVP)}\ ,\quad
z = P_t^{-1/2} v\ ,\quad
y^{(\tau+1)} = z/\|z\|_2\ ;    
\end{equation}
(iii) compute the Rayleigh estimate
$\hat\lambda_t = (y^{(\tau)})^\top G_t y^{(\tau)} = u^\top v$
and stop when the change in
$\hat\lambda_t$ falls below a tolerance.
The output $(\hat\lambda_t, u_{1,t}=y^{(\tau^\star)})$ is reused at step $t{+}1$.
Each iteration costs a single HVP plus cheap elementwise scalings by $P_t^{\pm 1/2}$; with warm starts,
$\tau^\star$ is typically $<5$, enabling efficient, step-by-step tracking of the \emph{effective} curvature
that governs stability under momentum/Adam (e.g., see Fig.~\ref{fig:depth-curv}).



\subsection{Architecture Warm-Up}

Now, we are ready to introduce our approach to control the curvature. Specifically, we first review that deeper networks have the potential to increase the curvature, and provide an approach to seamlessly increase the number of trainable layers without introducing any function or gradient discontinuities.  Recall that, at the EoS, the learning rate and curvature are inversely proportional; therefore, we keep the network shallow in the early learning rate warm-up phase, and gradually increase (effective) depth when we start decaying the learning rate, ensuring the stability criterion is satisfied throughout training. The proposed method can be readily integrated into existing training recipes and assumes a standard transformer architecture without requiring hardware-level operations.

\subsubsection{Deeper Networks Increase curvature and Shrink Stability Threshold}
\label{sec:depth-curvature}

Let $g_\theta=\Phi_L\circ\cdots\circ\Phi_1$ be an $L$-block residual Transformer with
$\Phi_\ell(x)=x+B_\ell(x)$. For the input Jacobian,
\begin{equation}
\label{eq:depth-lip-main}
\|J_x g_\theta(x)\|
\;\le\; \prod_{\ell=1}^L \|I+\partial B_\ell(x_{\ell-1})\|
\;\le\; \exp\!\Big(\sum_{\ell=1}^L \|\partial B_\ell(x_{\ell-1})\|\Big)\ .
\end{equation}
For losses $\mathcal{L}(\theta)=\mathbb{E}_{(x,y)}[\ell(g_\theta(x),y)]$ with
$\lambda_{\max}(\nabla_z^2\ell)\le L_\ell$, the Gauss–Newton bound yields
\begin{equation}
\label{eq:depth-curv-main}
\lambda_{\max}\!\big(\nabla_\theta^2\mathcal{L}(\theta)\big)
\;\le\; L_\ell\,\big(\sup_x \|J_\theta g_\theta(x)\|\big)^2\ ,
\end{equation}
and $J_\theta g_\theta$ inherits the multiplicative growth in Eq.~\ref{eq:depth-lip-main} through
backprop. Hence $\lambda_{\max}(H)$ (and, under a Positive Semi-Definite (PSD) diagonal preconditioner $P_t$,
$\lambda_{\max}(G_t)$) increases with depth $L$, shrinking the
first-order stability margin. See App.~\ref{sec:app_depth} for more formal results and discussions.

\paragraph{Remarks.}
Although we can derive an upperbound as in Eq.~\ref{eq:depth-curv-main} it is not sufficient to conclude that curvature must strictly increase with depth. Empirically, the curvature might be lower than the bound due to architecture components such as residual connections, and normalization~\citep{sagun2016eigenvalues,li2018visualizing,ghorbani2019investigation,yao2020pyhessian}, or due to the specific parameterization of the model~\citep{dinh2017sharp}.
Furthermore, since the \emph{preconditioned} geometry is the relevant one for adaptive optimizers, efficient \emph{online} tracking of
(curvature) top eigenvalues, especially the preconditioned analogue $\lambda_{\max}(G_t)$, is more
informative than pointwise analyses at initialization or at local optima~\citep{yao2020pyhessian,sagun2016eigenvalues,ghorbani2019investigation}. Our method enables such
tracking up to multi-billion-parameter Transformers, where we observe (in Sec.~\ref{sec:exp-depth}) that
(i) preconditioned curvature spikes predominantly during learning-rate warm-up and (ii) increases with
depth.

\subsubsection{Progressive Depth via constraining Block Weights to zero}
\label{sec:zero-proj-warmup}


Recall that at the stability threshold, the (preconditioned) curvature $\lambda_{\text{max}}(G_t)$ scales inversely with the learning rate $\eta$: larger $\eta$, the
\emph{smaller} the admissible $\lambda_{\text{max}}(G_t)$ must be to remain stable. Consequently, while
$\eta$ ramps up during warmup (tightening the threshold), we keep the network’s \emph{effective depth}
low to limit $\lambda_{\max}(G_t)$. As training proceeds—after the peak learning rate or during
learning-rate decay, when the stability threshold relaxes—we progressively enable additional depth,
activating the full model only once the stability margin permits it. To this end, we wish to add depth keeping \emph{controlling curvature} and avoiding function discontinuities. Note that the transformer block function can be expanded as follows:  
Let the input to the $l^{\text{th}}$ layer be $\X^{l} \in \mathbb{R}^{b \times n \times d}$, where $b$, $n$, and $d$ are the batch size, sequence length, and embedding dimension, respectively. Then the function admits a recursive structure as 
\begin{equation}
    \X^{l+1} = \X^l_{\text{hidden}}\W_{p_2}^l  + \X^l_{\text{concat}}\W^l_{p_1} + \X^{l}\ ,
\end{equation}
where $\text{hidden}$ and $\text{concat}$ are the hidden layer of the feedforward network and the concatinated output of the attention heads, respectively. $\W^l_{p_1}$ and $\W^l_{p_2}$ are linear projection weights. A natural idea is to hold only the projection matrices at zero (i.e., $\W^l_{p_1}=\W^l_{p_2}=\mathbf{0}$)
so the block computes the identity with $\X_{l+1} = \X_l$, and can later be ``unlocked.'' However, this can still induce a
\emph{discontinuous jump in the network’s Lipschitz constant} at unlock: even a small change in
$\W^l_{p_1}$ or $\W^l_{p_2}$ immediately injects the pre-existing (randomly initialized) attention/FFN
paths into the residual stream. Formally, the
input–output Jacobian of the $l$-th block satisfies the first-order bound
\begin{equation}
\|J_{\X^l}\Phi_l - I\|
\;\le\; \|\W^l_{p_1}\|\,\|J_{\X^l}\X^l_{\mathrm{concat}}\|
\;+\; \|\W^l_{p_2}\|\,\|J_{\X^l} \X_{\text{hidden}}\|    .
\end{equation}

If all other weights retain their (nonzero) random initialization while the
projections are zeroed, then $\|J_{\X^l}\X^l_{\mathrm{concat}}\|$ and $\|J_{\X^l} \X_{\text{hidden}}\|$
can already be large at the moment of unlock; consequently, even tiny updates to $\W^l_{p_1}$ or
$\W^l_{p_2}$ can \emph{abruptly raise} $\|J_{\X^l}\Phi_l\|$, inflating $\sup_x \|J_\theta g_\theta(x)\|$ of the bound  Eq.~\ref{eq:depth-curv-main}, and thus increasing the curvature. This sudden elevation can push the model over the stability threshold, often manifesting as loss spikes or instabilities.

To remedy this, and to guarantee continuity of both the \emph{function} and its \emph{first derivative} at unlock, we set all the weights (except RMSNorm weights)\footnote{Excluding RMSNorm weights is critical since they show inferior convergence from zero initialization.}
and exclude these parameters from the optimizer while the block is locked. Under this constraint,
$\X^l_{\mathrm{concat}}=\mathbf{0}$ and $ \X_{\text{hidden}}=\mathbf{0}$, so 
$\X^{l+1}=\X^l\quad\text{and}\quad J_{\X^l}\Phi_l = I$, 
i.e., the block is an exact identity with \emph{no increase} in the Jacobian. When the block
is unlocked, all paths start from zero, and the Jacobian perturbation grows smoothly as the newly
trainable weights move away from zero.
This prevents the instantaneous jump in effective Jacobian and thus avoids the associated
curvature spike. In practice, we keep all block weights at zero and frozen within the optimizer until a
curvature criterion is met; then we start training them with zero initialization. This \emph{architecture
warm-up} keeps the product $\eta\,\lambda_{\max}(G_t)$ within the stability envelope while depth
increases, yielding smoother loss and more reliable training.

\paragraph{Does architecture warm up compromise representation capacity?} We provide intuitions from two perspectives; (i) \emph{Spectral bias / F-principle:} deep nets fit
\emph{low}-frequency structure first, with higher frequencies learned later
\citep{rahaman2019spectral,xu2019fprinciple}; a shallow stack suffices early, so temporarily limiting
depth does not bottleneck what the model actually learns. (ii) \emph{Function-space/NTK view:} early
training operates in a near-linear, low-curvature regime where learning aligns with dominant,
low-complexity components \citep{jacot2018ntk}; additional layers can be enabled later to increase
expressivity without restricting the attainable solution class. Our convergence results, and prior
work on progressive, function-preserving growth, corroborate that deferred depth does not harm final
performance \citep{chen2016net2net,wei2016networkmorphism,gong2019progressive,chen2020progressive}.

\section{Experiments}

We evaluate decoder-only Transformers (Llama~3–style \citep{dubey2024llama}) on three large-scale corpora: Fineweb \citep{penedo2024fineweb}, DCLM \citep{li2024datacompLM} and Olmo-Mix \citep{allenai_olmomix_1124}. We reserve a held-out set for validation. Unless otherwise noted, we train with
context length $1024$, embedding dimension $2048$, $32$ heads, global batch size $1024$, weight decay
$0.01$, AdamW optimizer with standard parameters, and a linear warmup over $2000$ steps followed by linear decay. Tokenization is GPT-2
(vocabulary size $50{,}000$). We use five power iterations with warm-start for online curvature tracking. Our experiments include models scaling up to 3-billion parameters. \textbf{Architecture warm-up schedule:} Unless noted otherwise, we keep the model at half depth until
learning-rate warmup completes, then unlock the remaining layers in four groups, spaced by 500
training iterations, to reach full depth. Although we use this schedule, we observed that performance is not tightly
coupled to this spacing (see App.~\ref{app:warmup}).

\begin{figure} 
  \centering
  \includegraphics[width=0.5\linewidth]{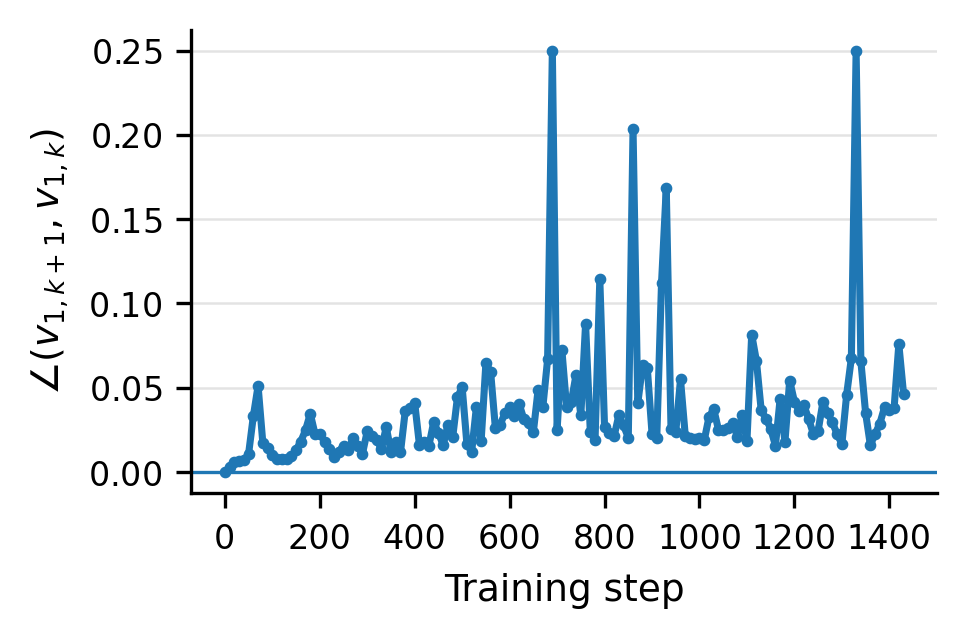}
  \captionsetup{skip=4pt}                 
  \caption{ \textbf{Leading eigenvector is slow-moving.}
  For a 4-layer Transformer, we compute the \emph{exact} top Hessian eigenvector at each train step and
  plot the principal angle to the next step. Consistent with Theorem~\ref{thm:discrete-drift},
  angles are typically $<\!0.1$ radians, with only rare outliers.}
  \label{fig:eigenvec}           
\end{figure}

\subsection{Slowly Moving Top Eigenvectors}
\label{sec:exp-slow-move}

Theorem~\ref{thm:discrete-drift} predicts that the leading Hessian eigendirection moves slowly along
the optimization path. To verify this empirically (without any estimator bias), we compute the
\emph{exact} Hessian for a 4-layer Transformer and measure the principal angle between successive
top eigenvectors across training steps. We choose a shallow network as computing the exact Hessian of a large model is computationally prohibitive. Fig.~\ref{fig:eigenvec} shows that these angles are
typically $<\!0.1$~rad ($\approx 5.7^\circ$) (except for a few spikes), confirming the ``slow drift'' property.


\subsection{Effectiveness of Warm-Start Power Iteration}
\label{sec:exp-warmstart}

We compare our warm-started estimator to a cold-start (random) power method on a 4-layer model,
using the relative error $|\hat\lambda-\lambda_{\text{exact}}|$ as the metric.
Figure~\ref{fig:hvp} reports error versus iteration count. We ran each experiment five times and report the error bars. Warm start achieves
high-accuracy estimates even with $\approx 5$ iterations, while the cold start error is higher with even
$\ge\!20$ iterations. Further, the variance of the cold-start error is constantly high across the number of iterations. As shown in Fig.~\ref{fig:hvp} this is due to the occasional large errors at certain steps (sensitivity to the
initializer) even with a high iteration count. This confirms that reusing the previous step’s top direction substantially reduces the
HVP budget and stabilizes power convergence.

\begin{figure} 
  \vspace{-0.5\baselineskip}         
  \centering
  \begin{minipage}{\linewidth}       
    \begin{subfigure}[t]{0.495\linewidth}
      \centering
      \includegraphics[width=\linewidth]{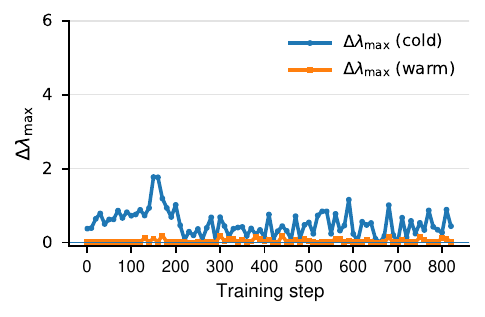}
      \label{fig:5iter}
      \caption{$5$ HVP iterations.}
    \end{subfigure}\hfill
    \begin{subfigure}[t]{0.495\linewidth}
      \centering
      \includegraphics[width=\linewidth]{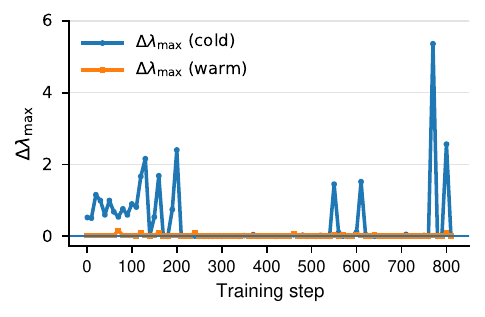}
      \label{fig:20itr}
      \caption{$20$ HVP iterations.}
    \end{subfigure}

    \vspace{0.4em}

    \begin{subfigure}[t]{\linewidth}
      \centering
      \includegraphics[width=0.5 \linewidth]{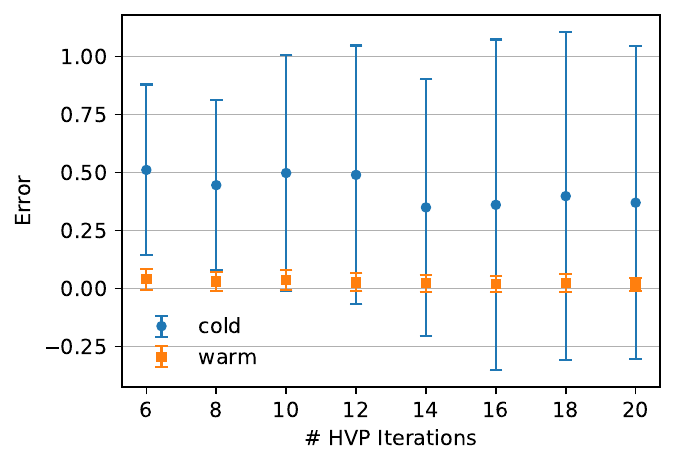}
      \caption{Error across different \# HVP iterations.}
    \end{subfigure}
  \end{minipage}
  \caption{\textbf{Effectiveness of warm-start HVP.} 
(a,b) Online tracking error (vs.\ exact curvature) using 5 and 20 power-iteration steps, respectively, over the course of training. 
(c) Error vs.\ number of power-iteration steps with error bars over 5 random seeds (both model init and HVP probes). 
The warm-started estimator converges faster (even with 5 HVPs) and attains lower error and variance than cold-start baselines.
}
  \label{fig:hvp}     
\end{figure}

\subsection{Depth, Effective Curvature, and Stability}
\label{sec:exp-depth}

We examine the impact of depth on curvature and stability by training $8$, $16$, and $32$-layer
models on the FineWeb dataset with peak learning rate $8\times10^{-3}$. Figure~\ref{fig:depth-curv}
plots $\lambda_{\max}(H)$ and $\lambda_{\max}(G_t)$ over training. The 8-layer network maintains low,
stable curvature and a smooth loss trajectory. As depth increases, both $\lambda_{\max}(H)$ and
$\lambda_{\max}(G_t)$ exhibit higher levels and variability, and the training loss becomes prone to
spikes and divergence. These results support the hypothesis that deeper stacks go beyond the stability margin with increased curvature. This observation
motivates our \emph{architecture warm-up}: start shallow (low curvature), when during the learning rate warm-up (where the stability threshold increasingly becomes lower) and progressively unlock
additional layers when the stability margin is higher.

 \begin{figure}[t]
  \centering
  \begin{subfigure}{0.3\linewidth}
    \centering
    \includegraphics[width=\linewidth]{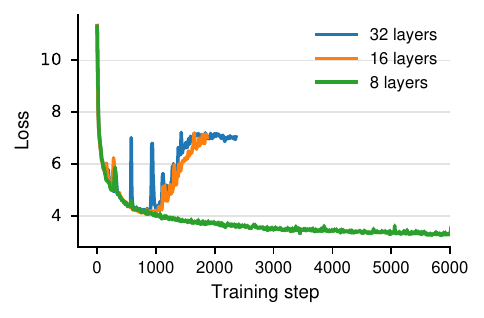}
    \caption{Training loss}
    \label{fig:cold}
  \end{subfigure}
  \begin{subfigure}{0.3\linewidth}
    \centering
    \includegraphics[width=\linewidth]{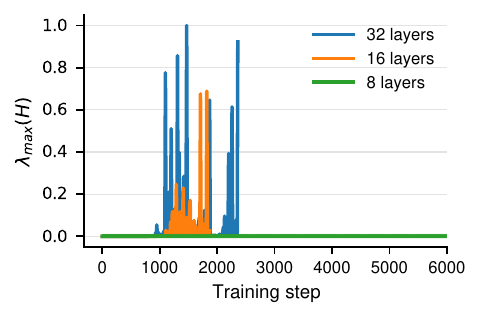}
    \caption{$\lambda_{\text{max}}(H)$}
    \label{fig:warm-h}
  \end{subfigure}
  \begin{subfigure}{0.3\linewidth}
    \centering
\includegraphics[width=\linewidth]{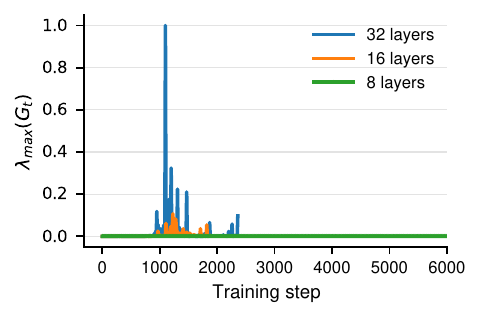}
    \caption{$\lambda_{\text{max}}(G_t)$}
    \label{fig:warm-g}
  \end{subfigure}
  \caption{ \textbf{Curvature vs.\ depth.} We train transformers of increasing depth/size—8 layers (640M), 16 layers (1B), and 32 layers (3B)—with a learning rate $8\times 10^{-3}$ and track training loss, $\lambda_{\max}(H)$, and $\lambda_{\max}(G_t)$. Consistent with Sec.~\ref{sec:depth-curvature}, curvature rises with depth: the 16 and 32-layer models exhibit pronounced spikes in $\lambda_{\max}(H)$ and $\lambda_{\max}(G_t)$ during learning-rate warmup, crossing the stability boundary and diverging. By contrast, the 8-layer model maintains lower curvature and converges stably  
($\lambda_{\max}(H)$ and $\lambda_{\max}(G_t)$ are normalized).}
  \label{fig:depth-curv}
\end{figure}

\subsection{Stability of architecture warm-up}

As discussed in Sec.~\ref{sec:precond-hessian}, under the stability threshold, $\lambda_{\max}(G_t)$ scales as $O(1/\lambda_{\max}(G_t))$ against  $\eta$ in first-order methods. Thus, larger $\eta$ demands \emph{smaller} effective curvature. We show
that \emph{architecture warm-up}, which suppresses curvature early and lets it grow in a controlled
manner, substantially widens the range of stable learning rates: across peak-$\eta$ sweeps, models
trained with architecture warm-up maintain bounded $\lambda_{\max}(G_t)$ and avoid loss spikes, whereas vanilla
networks (no architecture warm-up) exhibit rapid curvature escalation and unstable convergence under the same
settings (Fig.~\ref{fig:lr-stability}).

\begin{figure}[H]

    \centering
    
    \begin{subfigure}{0.24\textwidth}
        \includegraphics[width=\linewidth]{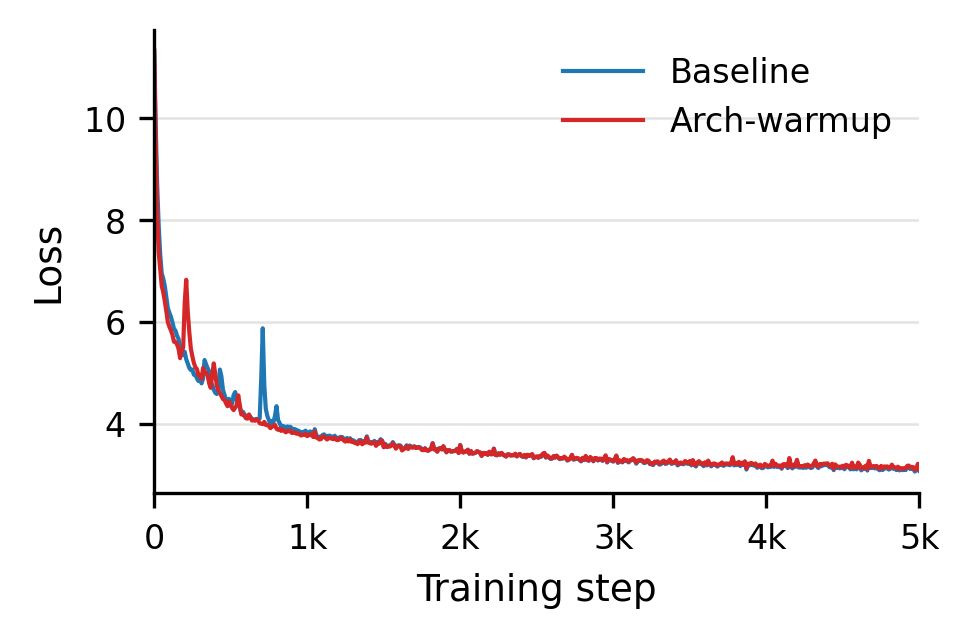}
        \caption{ \scriptsize$\eta = 3 \times 10^{-3}$}
    \end{subfigure}
    \hfill
    \begin{subfigure}{0.24\textwidth}
        \includegraphics[width=\linewidth]{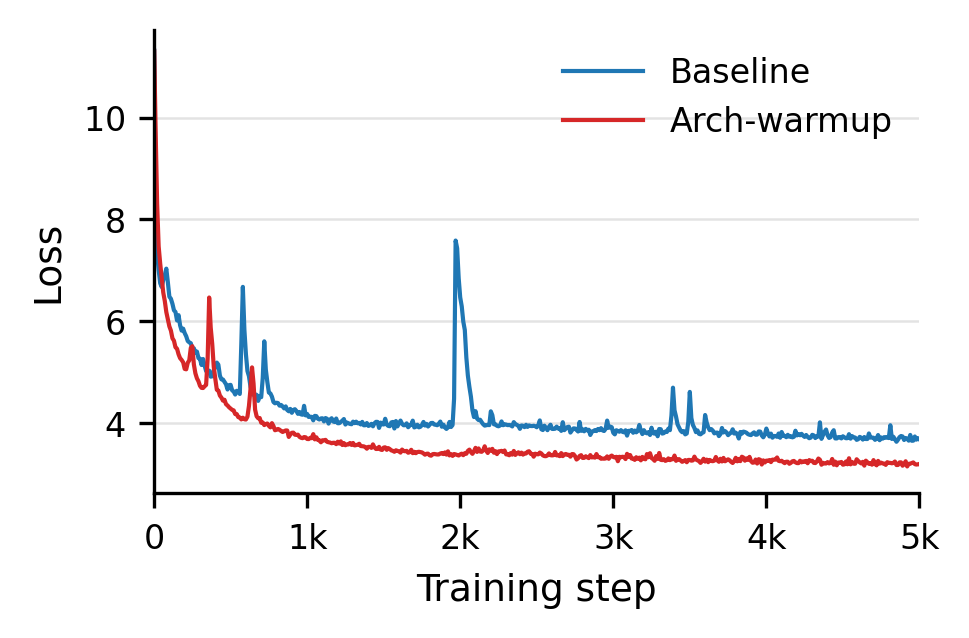}
        \caption{ \scriptsize $\eta = 6 \times 10^{-3}$}
    \end{subfigure}
    \hfill
    \begin{subfigure}{0.24\textwidth}
        \includegraphics[width=\linewidth]{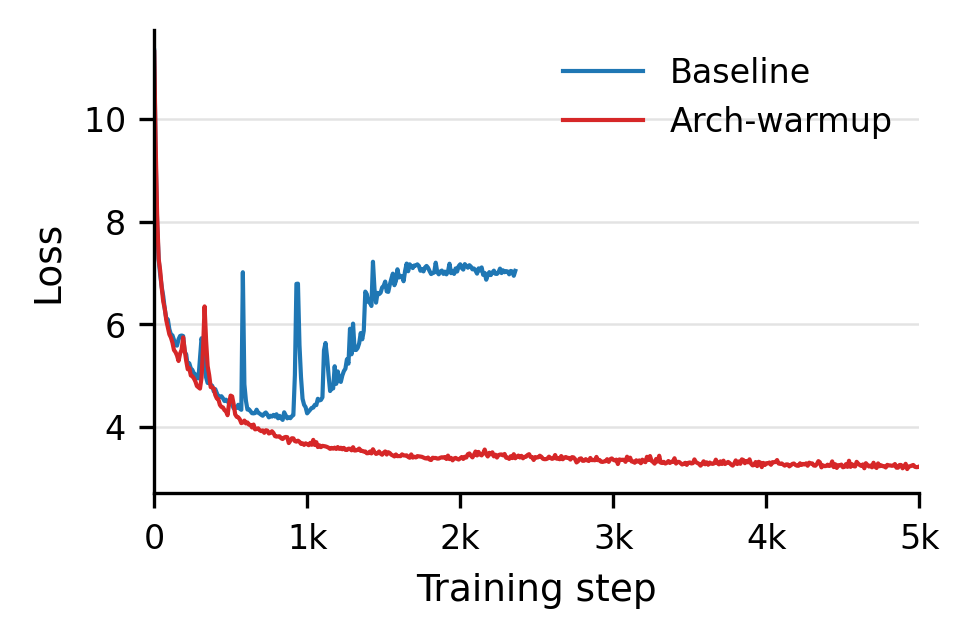}
        \caption{\scriptsize $\eta = 8 \times 10^{-3}$}
    \end{subfigure}
    \hfill
    \begin{subfigure}{0.24\textwidth}
        \includegraphics[width=\linewidth]{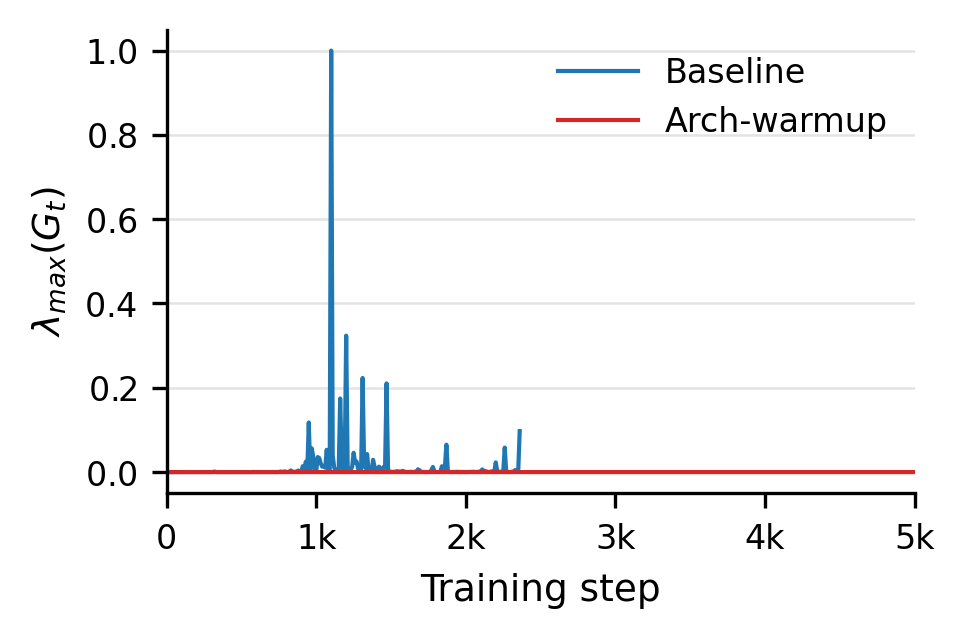}

        \caption{ \scriptsize $\lambda_{\text{max}}(G_t)@\eta = 8 \times 10^{-3}$}
    \end{subfigure}
    \caption{\textbf{Convergence under varying stability thresholds.} 
As the stability boundary scales as $O(1/\eta)$ for fixed curvature, we sweep the peak learning rate $\eta$ (i.e.,  tightening/relaxing the threshold) and compare \emph{architecture warm-up} to an unaltered baseline. 
As $\eta$ increases (smaller $1/\eta$ margin), the baseline becomes increasingly unstable, whereas architecture warm-up maintains stable convergence across the range. Models are trained on FineWeb.
 }
    \label{fig:lr-stability}
\end{figure}

\subsection{Comparison against other stabilization methods}
\vspace{-0.5em}
We compare architecture warmup to three state-of-the-art stabilization baselines, QK-Norm \citep{henry2020qknorm}, QK-Clip \citep{team2025kimi},
and Softcap \citep{gemma2024softcap}, and an unmodified baseline (Fig. \ref{fig:baselines}) with a peak LR of $8 \times 10^{-3}$. To this end, we use 16-layer, 1B parameter models. Note that we intentionally use a higher learning rate to observe the performance of methods under a smaller stability threshold. Architecture Warmup consistently 
reduces spike frequency and magnitude, avoids divergence, and exhibit faster convergence  where other
methods destabilize, yielding more reliable training across datasets. Interestingly, we found \textsc{QK-Clip} to be quite unstable, often diverging to NaN values mid training. Table \ref{tab:perplexity-compare} shows validation perplexities. As QK-norm was  performing best on FineWeb, we compare against it on a longer training run, up to Chinchilla \citep{hoffmann2022chinchilla} compute optimal (see App. \ref{app:compute_optimal})


\begin{figure}[H]

    \centering
    
    \begin{subfigure}{0.3\textwidth}
        \includegraphics[width=\linewidth]{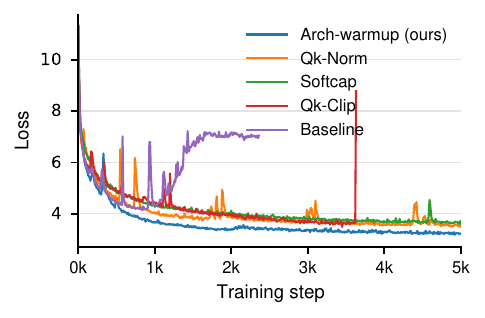}
        \caption{ FineWeb}
    \end{subfigure}
    \hfill
    \begin{subfigure}{0.3\textwidth}
        \includegraphics[width=\linewidth]{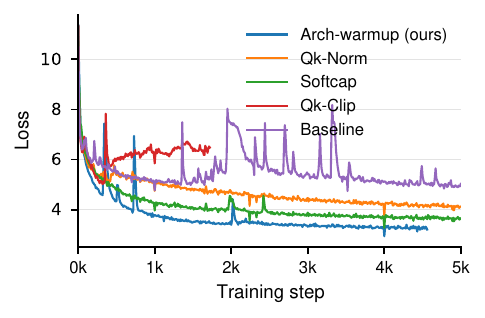}
        \caption{DCLM }
    \end{subfigure}
    \hfill
    \begin{subfigure}{0.3\textwidth}
        \includegraphics[width=\linewidth]{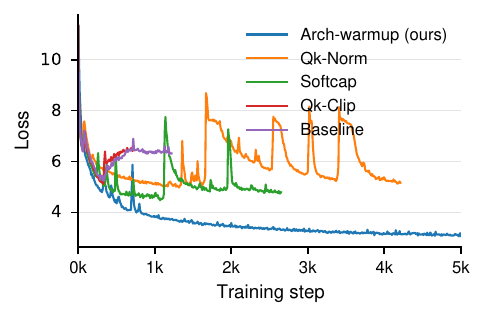}
        \caption{Olmo-Mix}
    \end{subfigure}

    \caption{ \textbf{Comparison against existing stabilization techniques.}
Across datasets, competing methods converge more slowly and exhibit frequent loss spikes, sometimes leading to divergence, whereas our method remains stable and consistently faster to train.}
    \label{fig:baselines}
\end{figure}

\begin{table}[t]
\scriptsize
  \centering
  \caption{Perplexity (\(\downarrow\)) across three datasets and five methods. $*$ indicates the diverged runs.}
  \label{tab:perplexity-compare}
  \begin{tabular}{lccccc}
    \toprule
    \multicolumn{1}{c}{\textbf{Dataset}} &
    \multicolumn{1}{c}{\textbf{Baseline}} &
    \multicolumn{1}{c}{\textbf{QK-Norm}} &
    \multicolumn{1}{c}{\textbf{QK-Clip}} &
    \multicolumn{1}{c}{\textbf{Softcap}} &
    \multicolumn{1}{c}{\textbf{Arch-Warmup}} \\
    \midrule
    FineWeb   & $*$ & 49.88 $\pm$ 0.003  & $*$ & 51.41 $\pm$ 0.002  & \textbf{25.02 $\pm$ 0.001} \\
    DCLM      & 165.62 $\pm$ 0.04 & 61.57 $\pm$ 0.012   & $*$ & 43.38 $\pm$ 0.03  & \textbf{22.64 $\pm$ 0.002} \\
    OLMo-Mix  & $*$ & $*$ & $*$ & $*$ & \textbf{18.54 $\pm$ 0.001} \\
    \bottomrule
  \end{tabular}
\end{table}

\subsection{Can learning-rate warmup be replaced by architecture warm-up?}

\begin{figure} 
  \centering
  \includegraphics[width=0.5\linewidth]{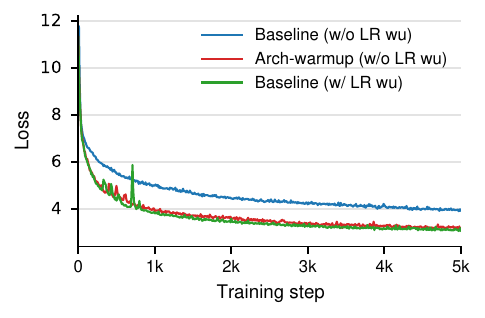}
  \captionsetup{skip=4pt}                 
  \caption{ \textbf{Replacing LR warmup with Arch-warmup.}
A 16-layer (1B) Transformer trained on FineWeb using \emph{architecture warm-up} \emph{without} learning-rate warmup attains on-par convergence while exhibiting more stable training.
 }
  \label{fig:nolr}
  \vspace{-0.9\baselineskip}              
\end{figure}

Learning-rate warmup stabilizes early training by enlarging the stability margin ($\propto O(1/\eta)$) when curvature is high; as $\eta$ increases, this margin tightens. \emph{Architecture warm-up} plays the complementary role by directly \emph{controlling curvature}: keep depth (and thus $\lambda_{\max}(G_t)$) low initially, then increase depth as training progresses. We therefore ask whether LR warmup is necessary if curvature is gated by architecture. As shown in Fig.~\ref{fig:nolr}, models equipped with architecture warmup, achieves on par convergence while exceeding the stability of LR warmup. 

Here, the baseline is trained with a standard, well-tuned learning-rate schedule with warm-up. Concretely, we use commonly adopted hyperparameters for LLaMA-style models: peak learning rate $4 \times 10^{-4}$, weight decay $0.1$, $2000$ warm-up steps, and a cosine decay schedule. 

The \emph{architecture-warm-up–only} variant is obtained by taking this baseline configuration and removing only the LR warm-up: we set the number of warm-up steps to zero, keep the peak learning rate and decay schedule unchanged, keep all optimizer hyperparameters (including weight decay) fixed, and then enable architecture warm-up.

This suggests architecture warm-up has the potential to \emph{replace} LR warmup in practice, or be combined with it for an even wider stable operating range.

\section{Related works}


\paragraph{Transformer training stability.}
Stabilization strategies for transformers span attention- and optimizer-level interventions. On the attention side,
\emph{soft-capping} limits logit magnitudes to avoid softmax saturation \citep{gemma2024softcap}, and
\emph{$QK$-normalization} bounds dot-product scales \citep{henry2020qknorm}. On the optimization side,
methods adjust or regularize updates (e.g., Adafactor and related stabilizers)
\citep{shazeer2018adafactor,wortsman2023stable}. These techniques target proximate causes of loss
spikes and are complementary to curvature-based diagnostics that we focus on. \textbf{Edge of Stability.}
The \emph{Edge of Stability} (EoS) describes the regime where training hovers near the stability
boundary, with $\eta\,\lambda_{\max}(H)\!\approx\!2$ for full-batch GD \citep{cohen2021gradient,wang2022analyzing}.
Follow-ups generalized the phenomenon to preconditioned/adaptive methods, replacing $H$ by the
\emph{preconditioned} Hessian $G_t=P_t^{-1/2}HP_t^{-1/2}$, and analyzed implicit self-stabilization
dynamics \citep{cohen2022adaptive,damian2023selfstabilization,chen2023beyond}. Collectively, these
works suggest that practical training often operates close to the spectral stability limit, motivating
\emph{online} control of the (preconditioned) top curvature rather than relying solely on fixed
hypertuning. Our work aligns with this direction, but extend the analysis from previously explored small scale models to billion parameter transformers proposing an efficient HVP estimation mechanism. \textbf{Progressive growing of Transformers.}
Prior work increases Transformer capacity during training to cut cost while preserving accuracy:
\emph{Progressively Stacking} adds layers stagewise in BERT \citep{gong2019progressive,chen2020progressive};
function-preserving expansions (Net2Net, Network Morphism) enlarge depth/width without changing the
realized function \citep{chen2016net2net,wei2016networkmorphism}; and \emph{LiGO} learns growth
operators to expand pretrained Transformers with minimal regression \citep{li2023ligo}.
Our focus is stability: we keep the full graph present from initialization and and ensure that depth unlocking preserves curvature. This ties
growth to an explicit stability criterion, rather than fixed stage schedules or efficiency alone, and further avoids computational graph surgery.

\section{Conclusion}

We introduce a scalable framework for curvature-aware training of large Transformers. First, we propose an online estimator for the top eigenvalue of the (preconditioned) Hessian that
reuses the previous step’s eigenvector as a warm start. Under standard smoothness and eigengap
assumptions, we prove that the leading eigendirection is slow-moving, which yields rapid geometric
convergence of warm-started power iteration. Then, we propose
\emph{architecture warm-up}: a function-preserving mechanism that progressively increases the effective depth according to a curvature budget, thereby controlling the
growth of effective curvature. Empirically,
this combination broadens the range of stable learning rates, reduces loss spikes, and improves training
reliability.


\bibliographystyle{plainnat}
\nobibliography*
\bibliography{main}

@inproceedings{ghorbani2019investigation,
  title={An Investigation into Neural Net Hessians},
  author={Ghorbani, Amir and Krishnan, Vivek and Xiao, Ying},
  booktitle={International Conference on Machine Learning (ICML)},
  pages={2232--2241},
  year={2019},
  organization={PMLR}
}

@inproceedings{yao2020pyhessian,
  title={{PyHessian}: Neural networks through the lens of the Hessian},
  author={Yao, Zhewei and Gholami, Amir and Keutzer, Kurt and Mahoney, Michael W.},
  booktitle={IEEE International Conference on Big Data},
  pages={581--590},
  year={2020},
  organization={IEEE}
}

@inproceedings{cohen2021gradient,
  title={Gradient descent on neural networks typically occurs at the edge of stability},
  author={Cohen, Jeremy and Musco, Cameron and Musco, Christopher},
  booktitle={International Conference on Learning Representations (ICLR)},
  year={2021}
}

@inproceedings{damian2023selfstabilization,
  title        = {Self-Stabilization: The Implicit Bias of Gradient Descent at the Edge of Stability},
  author       = {Damian, Alex and Nichani, Eshaan and Lee, Jason D.},
  booktitle    = {International Conference on Learning Representations (ICLR)},
  year         = {2023},
  url          = {https://openreview.net/pdf?id=nhKHA59gXz}
}

@inproceedings{chen2023beyond,
  title        = {Beyond the Edge of Stability via Two-step Gradient Updates},
  author       = {Chen, Lei and Bruna, Joan},
  booktitle    = {Proceedings of the 40th International Conference on Machine Learning (ICML)},
  series       = {Proceedings of Machine Learning Research},
  volume       = {202},
  pages        = {2617--2665},
  year         = {2023},
  publisher    = {PMLR},
  url          = {https://proceedings.mlr.press/v202/chen23b.html}
}

@article{sagun2016eigenvalues,
  title        = {Eigenvalues of the Hessian in Deep Learning: Singularity and Beyond},
  author       = {Sagun, Levent and Bottou, L{\'e}on and LeCun, Yann},
  journal      = {arXiv preprint arXiv:1611.07476},
  year         = {2016},
  url          = {https://arxiv.org/abs/1611.07476}
}

@inproceedings{henry2020qknorm,
  title        = {Query-Key Normalization for Transformers},
  author       = {Henry, Alex and Dachapally, Prudhvi Raj and Pawar, Shubham and Chen, Yuxuan},
  booktitle    = {Findings of the Association for Computational Linguistics: EMNLP 2020},
  pages        = {4246--4253},
  year         = {2020},
  url          = {https://aclanthology.org/2020.findings-emnlp.379.pdf}
}

@inproceedings{vaswani2017attention,
  title        = {Attention Is All You Need},
  author       = {Vaswani, Ashish and Shazeer, Noam and Parmar, Niki and Uszkoreit, Jakob and Jones, Llion and Gomez, Aidan N. and Kaiser, {\L}ukasz and Polosukhin, Illia},
  booktitle    = {Advances in Neural Information Processing Systems (NeurIPS)},
  year         = {2017}
}

@article{kaplan2020scaling,
  title        = {Scaling Laws for Neural Language Models},
  author       = {Kaplan, Jared and McCandlish, Sam and Henighan, Tom and Brown, Tom B. and others},
  journal      = {arXiv preprint arXiv:2001.08361},
  year         = {2020}
}

@inproceedings{brown2020gpt3,
  title        = {Language Models are Few-Shot Learners},
  author       = {Brown, Tom B. and Mann, Benjamin and Ryder, Nick and Subbiah, Melanie and Kaplan, Jared and Dhariwal, Prafulla and Neelakantan, Arvind and others},
  booktitle    = {Advances in Neural Information Processing Systems (NeurIPS)},
  year         = {2020}
}

@inproceedings{ouyang2022instructgpt,
  title        = {Training Language Models to Follow Instructions with Human Feedback},
  author       = {Ouyang, Long and Wu, Jeff and Jiang, Xu and Almeida, Diogo and Wainwright, Carroll and Mishkin, Pam and Zhang, Chong and others},
  booktitle    = {Advances in Neural Information Processing Systems (NeurIPS)},
  year         = {2022}
}

@inproceedings{ho2020ddpm,
  title        = {Denoising Diffusion Probabilistic Models},
  author       = {Ho, Jonathan and Jain, Ajay and Abbeel, Pieter},
  booktitle    = {Advances in Neural Information Processing Systems (NeurIPS)},
  year         = {2020}
}

@article{gemma2024softcap,
  title        = {Gemma 2: Improving Open Language Models at a Practical Size},
  author       = {{Gemma Team}},
  journal      = {arXiv preprint arXiv:2408.00118},
  year         = {2024},
  note         = {Includes attention logit soft-capping details},
  url          = {https://arxiv.org/abs/2408.00118}
}

@article{chowdhery2022palm,
  title   = {PaLM: Scaling Language Modeling with Pathways},
  author  = {Chowdhery, Aakanksha and Narang, Sharan and Devlin, Jacob and Bosma, Maarten and Mishra, Gaurav and Roberts, Adam and Barham, Paul and Chung, Hyung Won and Sutton, Charles and Gehrmann, Sebastian and others},
  journal = {arXiv preprint arXiv:2204.02311},
  year    = {2022},
  url     = {https://arxiv.org/abs/2204.02311}
}

@inproceedings{dehghani2023scaling,
  title     = {Scaling Vision Transformers to 22 Billion Parameters},
  author    = {Dehghani, Mostafa and Djolonga, Josip and Mustafa, Basil and Padlewski, Piotr and Heek, Jonathan and Gilmer, Justin and Steiner, Andreas and Caron, Mathilde and Geirhos, Robert and Alabdulmohsin, Ibrahim and Jenatton, Rodolphe and Beyer, Lucas and Tschannen, Michael and Arnab, Anurag and Wang, Xiao and Riquelme, Carlos and Minderer, Matthias and Puigcerver, Joan and Evci, Utku and Kumar, Manoj and van Steenkiste, Sjoerd and Elsayed, Gamaleldin F. and Mahendran, Aravindh and Yu, Fisher and Oliver, Avital and Huot, Fantine and Bastings, Jasmijn and Collier, Mark Patrick and Gritsenko, Alexey A. and Birodkar, Vighnesh and Vasconcelos, Cristina and Tay, Yi and Mensink, Thomas and Kolesnikov, Alexander and Pavetic, Filip and Tran, Dustin and Kipf, Thomas and Lu{\v{c}}i{\'c}, Mario and Zhai, Xiaohua and Keysers, Daniel and Harmsen, Jeremiah and Houlsby, Neil},
  booktitle = {Proceedings of the 40th International Conference on Machine Learning (ICML)},
  series    = {Proceedings of Machine Learning Research},
  volume    = {202},
  year      = {2023},
  publisher = {PMLR},
  url       = {https://proceedings.mlr.press/v202/dehghani23a.html}
}

@article{zhang2022opt,
  title   = {OPT: Open Pre-trained Transformer Language Models},
  author  = {Zhang, Susan and Roller, Stephen and Goyal, Naman and Artetxe, Mikel and Chen, Moya and Chen, Shuohui and Dewan, Christopher and Diab, Mona and Li, Xian and Lin, Xi Victoria and Mihaylov, Todor and Ott, Myle and Shleifer, Sam and Simig, Daniel and Koura, Punit Singh and Sridhar, Anjali and Wang, Tianlu and Zettlemoyer, Luke and others},
  journal = {arXiv preprint arXiv:2205.01068},
  year    = {2022},
  url     = {https://arxiv.org/abs/2205.01068}
}

@article{molybog2023llm,
  title   = {A Theory on Adam Instability in Large-Scale Machine Learning},
  author  = {Molybog, Igor and Albert, Peter and Chen, Moya and DeVito, Zachary and Esiobu, David and Goyal, Naman and Koura, Punit Singh and Liskovich, Diana and Narang, Sharan and Poulton, Andrew and Silva, Ruan and Tang, Binh and Xu, Puxin and Zhang, Yuchen and Kambadur, Melanie and Roller, Stephen and Zhang, Susan},
  journal = {arXiv preprint arXiv:2304.09871},
  year    = {2023},
  url     = {https://arxiv.org/abs/2304.09871}
}

@article{granziol2025hessformer,
  title={HessFormer: Hessians at Foundation Scale},
  author={Granziol, Diego},
  journal={arXiv:2501.XXXX},
  year={2025}
}

@inproceedings{martens2010hf,
  title={Deep Learning via Hessian-Free Optimization},
  author={Martens, James},
  booktitle={ICML},
  year={2010}
}

@inproceedings{rombach2022ldm,
  title        = {High-Resolution Image Synthesis with Latent Diffusion Models},
  author       = {Rombach, Robin and Blattmann, Andreas and Lorenz, Dominik and Esser, Patrick and Ommer, Bj{\"o}rn},
  booktitle    = {Proceedings of the IEEE/CVF Conference on Computer Vision and Pattern Recognition (CVPR)},
  year         = {2022}
}

@article{olmo20242,
  title={2 OLMo 2 Furious},
  author={OLMo, Team and Walsh, Pete and Soldaini, Luca and Groeneveld, Dirk and Lo, Kyle and Arora, Shane and Bhagia, Akshita and Gu, Yuling and Huang, Shengyi and Jordan, Matt and others},
  journal={arXiv preprint arXiv:2501.00656},
  year={2024}
}

@article{wortsman2023small,
  title={Small-scale proxies for large-scale transformer training instabilities},
  author={Wortsman, Mitchell and Liu, Peter J and Xiao, Lechao and Everett, Katie and Alemi, Alex and Adlam, Ben and Co-Reyes, John D and Gur, Izzeddin and Kumar, Abhishek and Novak, Roman and others},
  journal={arXiv preprint arXiv:2309.14322},
  year={2023}
}

@article{cohen2022adaptive,
  title={Adaptive gradient methods at the edge of stability},
  author={Cohen, Jeremy M and Ghorbani, Behrooz and Krishnan, Shankar and Agarwal, Naman and Medapati, Sourabh and Badura, Michal and Suo, Daniel and Cardoze, David and Nado, Zachary and Dahl, George E and others},
  journal={arXiv preprint arXiv:2207.14484},
  year={2022}
}

@article{wortsman2023stable,
  title={Stable and low-precision training for large-scale vision-language models},
  author={Wortsman, Mitchell and Dettmers, Tim and Zettlemoyer, Luke and Morcos, Ari and Farhadi, Ali and Schmidt, Ludwig},
  journal={Advances in Neural Information Processing Systems},
  volume={36},
  pages={10271--10298},
  year={2023}
}

@article{gilmer2021loss,
  title={A loss curvature perspective on training instability in deep learning},
  author={Gilmer, Justin and Ghorbani, Behrooz and Garg, Ankush and Kudugunta, Sneha and Neyshabur, Behnam and Cardoze, David and Dahl, George and Nado, Zachary and Firat, Orhan},
  journal={arXiv preprint arXiv:2110.04369},
  year={2021}
}

@article{team2025kimi,
  title={Kimi k2: Open agentic intelligence},
  author={Team, Kimi and Bai, Yifan and Bao, Yiping and Chen, Guanduo and Chen, Jiahao and Chen, Ningxin and Chen, Ruijue and Chen, Yanru and Chen, Yuankun and Chen, Yutian and others},
  journal={arXiv preprint arXiv:2507.20534},
  year={2025}
}

@inproceedings{zhai2023stabilizing,
  title={Stabilizing transformer training by preventing attention entropy collapse},
  author={Zhai, Shuangfei and Likhomanenko, Tatiana and Littwin, Etai and Busbridge, Dan and Ramapuram, Jason and Zhang, Yizhe and Gu, Jiatao and Susskind, Joshua M},
  booktitle={International Conference on Machine Learning},
  pages={40770--40803},
  year={2023},
  organization={PMLR}
}

@inproceedings{shazeer2018adafactor,
  title={Adafactor: Adaptive learning rates with sublinear memory cost},
  author={Shazeer, Noam and Stern, Mitchell},
  booktitle={International Conference on Machine Learning},
  pages={4596--4604},
  year={2018},
  organization={PMLR}
}

@article{dubey2024llama,
  title={The llama 3 herd of models},
  author={Dubey, Abhimanyu and Jauhri, Abhinav and Pandey, Abhinav and Kadian, Abhishek and Al-Dahle, Ahmad and Letman, Aiesha and Mathur, Akhil and Schelten, Alan and Yang, Amy and Fan, Angela and others},
  journal={arXiv e-prints},
  pages={arXiv--2407},
  year={2024}
}

@article{wang2022analyzing,
  title={Analyzing sharpness along GD trajectory: Progressive sharpening and edge of stability},
  author={Wang, Zixuan and Li, Zhouzi and Li, Jian},
  journal={Advances in Neural Information Processing Systems},
  volume={35},
  pages={9983--9994},
  year={2022}
}

@article{kingma2014adam,
  title={Adam: A method for stochastic optimization},
  author={Kingma, Diederik P and Ba, Jimmy},
  journal={arXiv preprint arXiv:1412.6980},
  year={2014}
}

@inproceedings{li2018visualizing,
  title={Visualizing the Loss Landscape of Neural Nets},
  author={Li, Hao and Xu, Zheng and Taylor, Gavin and Studer, Christoph and Goldstein, Tom},
  booktitle={Advances in Neural Information Processing Systems (NeurIPS)},
  year={2018},
  url={https://arxiv.org/abs/1712.09913}
}

@article{dinh2017sharp,
  title={Sharp Minima Can Generalize For Deep Nets},
  author={Dinh, Laurent and Pascanu, Razvan and Bengio, Samy and Bengio, Yoshua},
  journal={arXiv preprint arXiv:1703.04933},
  year={2017},
  url={https://arxiv.org/abs/1703.04933}
}

@article{chen2016net2net,
  title        = {Net2Net: Accelerating Learning via Knowledge Transfer},
  author       = {Chen, Tianqi and Goodfellow, Ian and Shlens, Jonathon},
  journal      = {arXiv preprint arXiv:1511.05641},
  year         = {2016},
  url          = {https://arxiv.org/abs/1511.05641}
}

@inproceedings{wei2016networkmorphism,
  title        = {Network Morphism},
  author       = {Wei, Tao and Wang, Changhu and Rui, Yong and Chen, Changyou},
  booktitle    = {ICML Workshop on Principles of Machine Learning},
  year         = {2016},
  url          = {https://www.microsoft.com/en-us/research/publication/modularized-morphing-of-neural-networks/}
}

@inproceedings{gong2019progressive,
  title        = {Efficient Training of {BERT} by Progressively Stacking},
  author       = {Gong, Yichen and He, Yelong and Li, Weizhu and others},
  booktitle    = {International Conference on Machine Learning (ICML)},
  year         = {2019},
  url          = {https://proceedings.mlr.press/v97/gong19a/gong19a.pdf}
}

@article{chen2020progressive,
  title        = {Progressively Stacking 2.0: A Multi-stage Layerwise Training Method for BERT},
  author       = {Chen, Shuo and others},
  journal      = {arXiv preprint arXiv:2011.13635},
  year         = {2020},
  url          = {https://arxiv.org/abs/2011.13635}
}

@inproceedings{li2023ligo,
  title        = {Learning to Grow Pretrained Models for Efficient Transformer Training},
  author       = {Li, Zhen and Chen, Xiuyu and Liu, Ji and Wang, Zhangyang},
  booktitle    = {International Conference on Learning Representations (ICLR)},
  year         = {2023},
  url          = {https://openreview.net/forum?id=LmD2gq2kWUY}
}

@inproceedings{rahaman2019spectral,
  title        = {On the Spectral Bias of Neural Networks},
  author       = {Rahaman, Nasim and Arpit, Devansh and Baratin, Aristide and Draxler, Felix and Lin, Min and Hamprecht, Fred A. and Bengio, Yoshua and Courville, Aaron},
  booktitle    = {Proceedings of the 36th International Conference on Machine Learning (ICML)},
  series       = {Proceedings of Machine Learning Research},
  volume       = {97},
  pages        = {5301--5310},
  year         = {2019},
  publisher    = {PMLR},
  url          = {https://proceedings.mlr.press/v97/rahaman19a.html}
}

@article{xu2019fprinciple,
  title        = {Frequency Principle: Fourier Analysis Sheds Light on Deep Neural Networks},
  author       = {Xu, Zhi-Qin John and Zhang, Yaoyu and Luo, Tao and Xiao, Yanyang and Ma, Zheng},
  journal      = {arXiv preprint arXiv:1901.06523},
  year         = {2019},
  url          = {https://arxiv.org/abs/1901.06523}
}

@inproceedings{jacot2018ntk,
  title        = {Neural Tangent Kernel: Convergence and Generalization in Neural Networks},
  author       = {Jacot, Arthur and Gabriel, Franck and Hongler, Cl{\'e}ment},
  booktitle    = {Advances in Neural Information Processing Systems (NeurIPS)},
  volume       = {31},
  year         = {2018},
  url          = {https://proceedings.neurips.cc/paper/2018/hash/5a4be1fa34e62bb8a6ec6b91d2462f5a-Abstract.html}
}

@inproceedings{penedo2024fineweb,
  title        = {The FineWeb Datasets: Decanting the Web for the Finest Text Data at Scale},
  author       = {Penedo, Guilherme and Kydl{\'\i}{\v{c}}ek, Hynek and Ben Allal, Loubna and Lozhkov, Anton and Mitchell, Margaret and Raffel, Colin and von Werra, Leandro and Wolf, Thomas},
  booktitle    = {NeurIPS Datasets and Benchmarks Track},
  year         = {2024},
  url          = {https://papers.nips.cc/paper/2024/file/370df50ccfdf8bde18f8f9c2d9151bda-Paper-Datasets_and_Benchmarks_Track.pdf},
  note         = {See also arXiv:2406.17557}
}

@article{li2024datacompLM,
  title        = {DataComp-LM: In Search of the Next Generation of Training Sets for Language Models},
  author       = {Li, Jeffrey and Fang, Alex and Smyrnis, Georgios and Ivgi, Maor and Jordan, Matt and Gadre, Samir and Bansal, Hritik and Guha, Etash and Keh, Sedrick and Arora, Kushal and others},
  journal      = {arXiv preprint arXiv:2406.11794},
  year         = {2024},
  url          = {https://arxiv.org/abs/2406.11794}
}

@misc{allenai_olmomix_1124,
  title        = {OLMo-Mix-1124: Stage-1 Pretraining Mixture for OLMo 2},
  author       = {{Allen Institute for AI}},
  howpublished = {\url{https://huggingface.co/datasets/allenai/olmo-mix-1124}},
  year         = {2024},
  note         = {Dataset card (Hugging Face)}
}

@inproceedings{hoffmann2022chinchilla,
  title        = {Training Compute-Optimal Large Language Models},
  author       = {Hoffmann, Jordan and Borgeaud, Sebastian and Mensch, Arthur and Buchatskaya, Elena and Cai, Trevor and Rutherford, Eliza and Casas, Diego de Las and Henderson, Sally and Menick, Jacob and Millican, Katie and others},
  booktitle    = {Advances in Neural Information Processing Systems (NeurIPS)},
  year         = {2022},
  url          = {https://arxiv.org/abs/2203.15556}
}

\clearpage
\section*{Supplementary Materials} 
\label{sec:supp}

\section{Proofs}
\label{app:proofs}
\subsection{Proof for Theorem \ref{thm:discrete-drift}}


\begin{proof}

First, We recall a useful result from  Davis--Kahan, sin-$\Theta$ theorem.


\begin{tcolorbox}[enhanced,breakable,sharp corners,boxrule=0.5pt,                colback=white,colframe=black]
Let $\Sigma,\widehat{\Sigma}\in\mathbb{R}^{p\times p}$ be symmetric, with eigenvalues
$\lambda_1\ge\cdots\ge\lambda_p$ and $\hat\lambda_1\ge\cdots\ge\hat\lambda_p$, respectively.
Fix $1\le r\le s\le p$ and assume that
\[
\min\{\lambda_{r-1}-\lambda_r,\;\lambda_s-\lambda_{s+1}\}>0,
\]
where we define $\lambda_0=\infty$ and $\lambda_{p+1}=-\infty$.
Let $d=s-r+1$, and let
$V=(v_r,v_{r+1},\ldots,v_s)\in\mathbb{R}^{p\times d}$ and
$\widehat V=(\hat v_r,\hat v_{r+1},\ldots,\hat v_s)\in\mathbb{R}^{p\times d}$
have orthonormal columns satisfying $\Sigma v_j=\lambda_j v_j$ and
$\widehat{\Sigma}\,\hat v_j=\hat\lambda_j \hat v_j$ for $j=r,r+1,\ldots,s$.
Then
\begin{equation}\tag{2}
\bigl\|\sin\Theta(\widehat V, V)\bigr\|_{\mathrm{F}}
\;\le\;
\frac{2\,\min\!\bigl(d^{1/2}\,\|\widehat{\Sigma}-\Sigma\|_{\mathrm{op}},\;
\|\widehat{\Sigma}-\Sigma\|_{\mathrm{F}}\bigr)}
{\min\!\bigl(\lambda_{r-1}-\lambda_r,\;\lambda_s-\lambda_{s+1}\bigr)}.
\end{equation}
\end{tcolorbox}

We set $r=s=1$. Then we have, ${\min\!\bigl(\lambda_{r-1}-\lambda_r,\;\lambda_s-\lambda_{s+1}\bigr)} = \lambda_1 - \lambda_2$. Further, $\widehat V = \hat{v}_1$ and $V = v_1$, $d=1$.

Substituting $\hat{\Sigma} = H_{k+1}$ and $\Sigma = H_k$ to above result, and with the Lipschitz condition, we have 

\[
\sin\varepsilon_k
\;\le\;
\frac{2\,\min\!\bigl(\,\|H_{k+1}-H_k\|_{\mathrm{op}},\;
\|H_{k+1}-H_k\|_{\mathrm{F}}\bigr)}
{\gamma}
\]

And we know, $\min\!\bigl(\,\|H_{k+1}-H_k\|_{\mathrm{op}},\;
\|H_{k+1}-H_k\|_{\mathrm{F}}\bigr)=\|H_{k+1}-H_k\|_{\mathrm{F}}$.

So we have,

\begin{equation}
\label{eq:discrete-drift-app}
\sin\varepsilon_k \;\le\; \frac{\|H_{k+1}-H_k\|}{\gamma}
\;\le\; \frac{L_H}{\gamma}\,\|\theta_{k+1}-\theta_k\|.
\end{equation}

For gradient descent, $\|\theta_{k+1}-\theta_k\|=\eta_k\|\nabla f(\theta_k)\|$. If $\theta_{k+1}=\theta_k-\eta_k g_k$ with $\mathbb{E}[g_k\mid \theta_k]=\nabla f(\theta_k)$ and
$\mathbb{E}\|g_k\|^2\le G^2$, then
$\mathbb{E}\big[\sin\varepsilon_k\mid \theta_k\big]\le (L_H/\gamma)\,\eta_k\,\mathbb{E}\|g_k\|
\le (L_H/\gamma)\,\eta_k G$.
\end{proof}

\subsection{Proof for Theorem \ref{thm:warm-power}}

\begin{proof}

For a symmetric matrix with simple top eigenvalue,
\[
\tan\theta_{t+1}
= \frac{\|(I-v_1 v_1^\top)A y^{(t)}\|}{|\langle v_1, A y^{(t)}\rangle|}
\;\le\; \frac{|\lambda_2|}{|\lambda_1|}\,\tan\theta_t
= \rho\,\tan\theta_t,
\]
hence by induction
\[
\tan\theta_t \;\le\; \rho^{\,t}\,\tan\theta_0.
\]

Take $A=H_{k+1}$, $v_1=v_{1,k+1}$, and $y^{(0)}=v_{1,k}$.
Then
\[
\theta_0
= \angle\!\big(y^{(0)},v_{1,k+1}\big)
= \angle\!\big(v_{1,k},v_{1,k+1}\big)
= \varepsilon_k,
\]
so with $\rho_{k+1}:=\lambda_{2,k+1}/\lambda_{1,k+1}\in[0,1)$,
\[
\label{eq:warm-tan}
\tan\theta_t \;\le\; \rho_{k+1}^{\,t}\,\tan\varepsilon_k,
\]

Expand $y$ in the $H_{k+1}$-eigenbasis:
\[
y^\top H_{k+1} y
= \sum_{i\ge1}\lambda_{i,k+1}\,\langle y, v_{i,k+1}\rangle^2.
\]
Therefore,
\[
\lambda_{1,k+1}-y^\top H_{k+1}y
= \sum_{i\ge 2}\big(\lambda_{1,k+1}-\lambda_{i,k+1}\big)\,\langle y,v_{i,k+1}\rangle^2
\le \big(\lambda_{1,k+1}-\lambda_{2,k+1}\big)\sum_{i\ge 2}\langle y,v_{i,k+1}\rangle^2,
\]
and since $\sum_{i\ge2}\langle y,v_{i,k+1}\rangle^2=\sin^2\theta_t$,
\[
0 \;\le\; \lambda_{1,k+1}-y^\top H_{k+1}y
\;\le\; \big(\lambda_{1,k+1}-\lambda_{2,k+1}\big)\,\sin^2\theta_t.
\]
Using \eqref{eq:warm-tan} and $\sin\theta_t\le\tan\theta_t$ on $[0,\tfrac{\pi}{2}]$,
\[
\lambda_{1,k+1}-y^\top H_{k+1}y
\;\le\; \big(\lambda_{1,k+1}-\lambda_{2,k+1}\big)\,\rho_{k+1}^{\,2t}\,\tan^2\varepsilon_k
= (1-\rho_{k+1})\,\lambda_{1,k+1}\,\rho_{k+1}^{\,2t}\,\tan^2\varepsilon_k,
\]
which is \eqref{eq:warm-RQ}.

By Theorem~\ref{thm:discrete-drift},
\[
\tan\varepsilon_k \;\le\; \sin\varepsilon_k
\;\le\; \frac{L_H}{\gamma}\,\|\theta_{k+1}-\theta_k\|
\quad\Rightarrow\quad
\tan\theta_t \;\le\; \rho_{k+1}^{\,t}\,\frac{L_H}{\gamma}\,\|\theta_{k+1}-\theta_k\|.
\]

To ensure $\theta_t\le\delta\in(0,\tfrac{\pi}{2})$, it suffices that
\[
\rho_{k+1}^{\,t}\,\frac{L_H}{\gamma}\,\|\theta_{k+1}-\theta_k\|
\;\le\; \tan\delta.
\]
Since $0<\rho_{k+1}<1$, taking logs yields
\[
t \;\ge\;
\frac{\log\!\big(\tfrac{L_H}{\gamma}\,\|\theta_{k+1}-\theta_k\|\big)-\log(\tan\delta)}
{\log(1/\rho_{k+1})},
\].

Let $x\sim\mathrm{Unif}(\mathbb{S}^{d-1})$. With high probability,
\[
\langle x, v_{1,k+1}\rangle^2=\Theta(1/d)
\quad\Rightarrow\quad
\tan\angle(x,v_{1,k+1})=\Theta(\sqrt{d}).
\]
Thus $\tan\theta_t \le \rho_{k+1}^{\,t}\,\Theta(\sqrt{d})\le \tan\delta$ implies
\[
t_{\mathrm{rand}}
\;\gtrsim\;
\frac{\tfrac{1}{2}\log d - \log(\tan\delta)}{\log(1/\rho_{k+1})},
\]
and subtracting the warm-start bound gives \eqref{eq:delta-t}.
\end{proof}

\subsection{Implementation sketch  of warm-start power iteration}
\label{app:poweriteration}

\begin{algorithm}[H]
\caption{Warm-Start HVP Power Iteration for $\lambda_{\max}(H(\theta))$}
\label{alg:power-hvp-warm}
\begin{algorithmic}[1]
\STATE \textbf{input:} parameters $\theta$; optional warm vector $y_{\mathrm{warm}}$ with $\|y_{\mathrm{warm}}\|_2=1$; optional previous estimate $\hat\lambda_{\mathrm{warm}}$; tolerance $\varepsilon$
\STATE \textbf{initialize:}
\STATE \hspace{1em}$y^{(0)} \leftarrow \begin{cases}
\;y_{\mathrm{warm}}, & \text{if provided}\\
\;\text{randn unit vector}, & \text{otherwise}
\end{cases}$
\STATE \hspace{1em}\textbf{(early exit check)} If $\hat\lambda_{\mathrm{warm}}$ provided, optionally compute residual $r^{(0)} \!\leftarrow\! \|(H(\theta)-\hat\lambda_{\mathrm{warm}} I)y^{(0)}\|_2$ (1 HVP). If $r^{(0)} \le \varepsilon\,|\hat\lambda_{\mathrm{warm}}|$, \textbf{return} $(\hat\lambda_{\mathrm{warm}}, y^{(0)})$.
\FOR{$t=0,1,2,\dots$}
  \STATE $z^{(t+1)} \leftarrow H(\theta)\,y^{(t)}$ \COMMENT{Pearlmutter HVP}
  \STATE \textbf{guard:} If $\|z^{(t+1)}\|_2 = 0$ (numerical underflow), reinitialize $y^{(t)}$ to a fresh random unit vector and \textbf{continue}.
  \STATE $y^{(t+1)} \leftarrow z^{(t+1)} / \|z^{(t+1)}\|_2$ \COMMENT{power update}
  \STATE $\hat\lambda^{(t+1)} \leftarrow \langle y^{(t+1)}, H(\theta)\,y^{(t+1)}\rangle$ \COMMENT{one extra HVP or reuse if cached}
  \STATE \textbf{warm-start stabilization (optional):}
  \STATE \hspace{1em}\textit{(i) Momentum mix: } if $t=0$ and $y_{\mathrm{warm}}$ provided, set $y^{(1)} \leftarrow \mathrm{normalize}\!\left(\alpha\,y^{(1)} + (1{-}\alpha)\,y_{\mathrm{warm}}\right)$ with small $(1{-}\alpha)$ (e.g., $0.1$).
  \STATE \hspace{1em}\textit{(ii) Cosine guard: } if $t=0$ and $|\langle y^{(1)}, y_{\mathrm{warm}}\rangle| < \tau$ (e.g., $\tau{=}0.1$), replace $y^{(1)} \leftarrow y_{\mathrm{warm}}$.
  \STATE \textbf{stopping test:} If $\|(H(\theta)-\hat\lambda^{(t+1)} I)y^{(t+1)}\|_2 \le \varepsilon\,|\hat\lambda^{(t+1)}|$, \textbf{return} $(\hat\lambda^{(t+1)}, y^{(t+1)})$.
\ENDFOR
\end{algorithmic}
\end{algorithm}

Note that the proposed warm-start strategy is particularly well suited for large neural networks where HVPs are expensive 
and frequent monitoring of curvature is desirable. By amortizing eigenvector estimation across training 
steps, we enable efficient online tracking of sharpness without compromising accuracy. To the best of our 
knowledge, this continuation-style use of power iteration has not been previously explored in the deep 
learning literature, despite being a standard idea in numerical linear algebra. Our theoretical 
guarantees and empirical results demonstrate its clear superiority over random initialization for 
curvature estimation in neural network training.




\section{Depth, Lipschitz growth, and curvature}
\label{sec:app_depth}

In this section, we discuss the connection between the depth and the stability margin with formal results.

\paragraph{Setup and notation.}
Let $g_\theta=\Phi_L\circ\cdots\circ\Phi_1$ with residual blocks $\Phi_\ell(x)=x+B_\ell(x)$.
Write $J_\ell(x)=I+\partial B_\ell(x)$, $J_x g_\theta$ for the input Jacobian, and $x_0=x$,
$x_\ell=\Phi_\ell(x_{\ell-1})$. Norms are operator (spectral) norms.

\begin{lemma}[Depth–Lipschitz bound]
\label{lem:depth-lip}
For all $x$,
\[
\|J_x g_\theta(x)\|
\;\le\; \prod_{\ell=1}^L \|J_\ell(x_{\ell-1})\|
\;\le\; \exp\!\Big(\sum_{\ell=1}^L \|\partial B_\ell(x_{\ell-1})\|\Big).
\]
\end{lemma}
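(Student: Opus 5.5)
The argument has three ingredients: the chain rule, submultiplicativity of the operator norm, and the scalar inequality $1+a\le e^{a}$.

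\emph{Chain rule.} Since $g_\theta=\Phi_L\circ\cdots\circ\Phi_1$ and $x_\ell=\Phi_\ell(x_{\ell-1})$, the input Jacobian factors as
\[
J_x g_\theta(x)=J_L(x_{L-1})\,J_{L-1}(x_{L-2})\cdots J_1(x_0),
\]
where $J_\ell(x_{\ell-1})=I+\partial B_\ell(x_{\ell-1})$. This needs each $B_\ell$ to be differentiable at the relevant point. For piecewise-smooth blocks one can work with any element of the Clarke Jacobian, and the estimates below are unchanged.

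\emph{Submultiplicativity.} The spectral norm satisfies $\|AB\|\le\|A\|\,\|B\|$. Applying this $L-1$ times to the product above gives the first inequality,
\[
\|J_x g_\theta(x)\|\le\prod_{\ell=1}^L\|J_\ell(x_{\ell-1})\|.
\]

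\emph{Exponential bound.} The triangle inequality and $\|I\|=1$ give
\[
\|J_\ell(x_{\ell-1})\|\le 1+\|\partial B_\ell(x_{\ell-1})\|.
\]
Each factor is nonnegative, so these bounds can be multiplied together. Applying $1+a\le e^{a}$ (valid for $a\ge0$, from convexity of $\exp$) to each factor yields
\[
\prod_{\ell=1}^L\|J_\ell(x_{\ell-1})\|\le\prod_{\ell=1}^L e^{\|\partial B_\ell(x_{\ell-1})\|}=\exp\!\Big(\sum_{\ell=1}^L\|\partial B_\ell(x_{\ell-1})\|\Big),
\]
which is the second inequality.

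There is no substantive obstacle. The one point that needs care is that every Jacobian must be evaluated along the actual forward trajectory $x_{\ell-1}$, not at a common point. If the paper's $\partial B_\ell$ means a Lipschitz constant rather than a pointwise derivative, the same chain of estimates goes through with $\sup$ norms in place of pointwise norms.
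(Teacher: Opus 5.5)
Your proposal is correct and follows the paper's proof: the chain rule factorization $J_x g_\theta(x)=J_L(x_{L-1})\cdots J_1(x_0)$ and submultiplicativity give the first inequality. The bound $\|I+A\|\le 1+\|A\|\le \exp(\|A\|)$, applied factor by factor, gives the second.
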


\begin{proof}
By the chain rule, $J_x g_\theta(x)=J_L(x_{L-1})\cdots J_1(x_0)$. Submultiplicativity gives
$\|J_x g_\theta(x)\|\le \prod_\ell \|J_\ell(x_{\ell-1})\|$. Next use
$\|I+A\|\le 1+\|A\| \le \exp(\|A\|)$ to obtain the exponential bound.
\end{proof}

\begin{lemma}[Parameter sensitivity growth]
\label{lem:param-sens}
Let $J_\theta g_\theta(x)$ denote the Jacobian of $g_\theta$ w.r.t.\ parameters. Then
$\|J_\theta g_\theta(x)\|\le C\,\prod_{\ell=1}^L \|J_\ell(x_{\ell-1})\|$ for some constant $C$
depending on block parametrization (e.g., linear maps and elementwise activations yield $C=1$ up to
dimension factors). In particular, $J_\theta g_\theta$ inherits the multiplicative growth in
Lemma~\ref{lem:depth-lip}.
\end{lemma}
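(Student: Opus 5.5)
The plan is to differentiate the forward recursion $x_\ell=\Phi_\ell(x_{\ell-1};\theta_\ell)$ with respect to the parameters. Then we bound each block's contribution by the product of downstream input Jacobians, and collect the pieces with the bound already used for Lemma~\ref{lem:depth-lip}.

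\textbf{Step 1 (chain rule decomposition).} Write $\theta=(\theta_1,\dots,\theta_L)$, where $\theta_\ell$ are the parameters of block $\ell$. A perturbation of $\theta_\ell$ enters only through $B_\ell$ and is then propagated by the later blocks. This gives the block-column form
\[
J_{\theta_\ell} g_\theta(x)\;=\;J_L(x_{L-1})\cdots J_{\ell+1}(x_\ell)\,\partial_{\theta_\ell}B_\ell(x_{\ell-1}).
\]
Hence $J_\theta g_\theta(x)=\big[J_{\theta_1}g_\theta,\dots,J_{\theta_L}g_\theta\big]$. This identity is exact and follows by induction on $L$.

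\textbf{Step 2 (bound each column).} By submultiplicativity,
\[
\|J_{\theta_\ell}g_\theta(x)\|\le \prod_{m=\ell+1}^L\|J_m(x_{m-1})\|\;\|\partial_{\theta_\ell}B_\ell(x_{\ell-1})\|.
\]
Next I would bound the partial product by the full product $\prod_{m=1}^L\|J_m(x_{m-1})\|$. This uses $\|J_m\|\ge 1$, which needs justification; see Step 4. Let $c_\ell:=\sup\|\partial_{\theta_\ell}B_\ell(x_{\ell-1})\|$. For a linear map $W x$, the parameter Jacobian has norm $\|x_{\ell-1}\|$. For compositions with elementwise activations whose derivatives are bounded by $1$, it is bounded by products of weight norms and input norms. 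This is the source of the ``dimension factors'' the statement allows.

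\textbf{Step 3 (assemble).} For a horizontally concatenated operator, $\|[A_1,\dots,A_L]\|\le(\sum_\ell\|A_\ell\|^2)^{1/2}$. Applying this with the column bounds from Step 2 gives
\[
\|J_\theta g_\theta(x)\|\le\Big(\sum_\ell c_\ell^2\Big)^{1/2}\prod_{\ell=1}^L\|J_\ell(x_{\ell-1})\|,
\]
so we may take $C=(\sum_\ell c_\ell^2)^{1/2}$. Combined with Lemma~\ref{lem:depth-lip}, this yields the claimed inheritance of the exponential factor $\exp(\sum_\ell\|\partial B_\ell\|)$.

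\textbf{Step 4 (main obstacle).} Two points are delicate, and I would state them as explicit assumptions rather than hide them in $C$.

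First, replacing the tail product $\prod_{m>\ell}$ by the full product requires $\|J_m\|\ge1$. For residual blocks this is typical but not automatic, since $I+\partial B$ can contract. I would either assume it or keep the sharper tail products inside $C$.

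Second, $C$ depends on the activations $x_{\ell-1}$, whose norms themselves can grow with depth, roughly like $\|x\|\prod\|J\|$. So ``constant'' must be read as uniform over a bounded input set and a bounded parameter region. I would make this explicit by assuming $\sup_x\|x_{\ell}\|\le R$ and bounded weight norms within each block. Under those assumptions $C$ is independent of $L$ apart from the $\sqrt{L}$ factor coming from Step 3.
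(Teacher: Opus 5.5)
Your proposal is correct and takes essentially the same route as the paper's proof sketch. Both decompose $J_\theta g_\theta(x)$ by the chain rule into downstream input Jacobians times each block's own parameter Jacobian, apply submultiplicativity, and absorb the per-block factors into $C$; your version is more careful than the paper's, which silently relies on the two assumptions you make explicit ($\|J_m\|\ge 1$ to pass from tail products to the full product, genuinely needed since a block with $\partial B_m=-I$ would make the right-hand side vanish, and bounded activations/weights so that $C$ is a true constant), and your concatenation bound makes the assembly step, with $C=(\sum_\ell c_\ell^2)^{1/2}$, explicit.
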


\begin{proof}
Differentiate the composition with respect to block parameters; each term contains a product of
input Jacobians $J_k$ before and after the block where parameters appear. Bounding each product by
$\prod_\ell \|J_\ell\|$ gives the stated inequality (constants collect per-block linear maps).
\end{proof}

\begin{proposition}[Curvature bound via Gauss–Newton]
\label{prop:gn-bound}
Assume $\ell$ is twice differentiable in $z=g_\theta(x)$ with
$\lambda_{\max}(\nabla_z^2\ell)\le L_\ell$. Then
\[
\lambda_{\max}\!\big(\nabla_\theta^2\mathcal{L}(\theta)\big)
\;\le\; L_\ell\,\big(\sup_x \|J_\theta g_\theta(x)\|\big)^2.
\]
\end{proposition}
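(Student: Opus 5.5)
The plan is to start from the exact decomposition of the Hessian of a composite loss. For a fixed sample $(x,y)$ write $z=g_\theta(x)$ and $J\coloneqq J_\theta g_\theta(x)$. The chain rule gives
\begin{equation*}
\nabla_\theta^2\,\ell(g_\theta(x),y)\;=\;J^\top\,\nabla_z^2\ell\,J\;+\;\sum_{j}\big(\partial_{z_j}\ell\big)\,\nabla_\theta^2 g_{\theta,j}(x).
\end{equation*}
The first term is the Gauss--Newton (GGN) term. The second is the functional-curvature residual. Taking the expectation over $(x,y)$ gives $\nabla_\theta^2\mathcal{L}(\theta)=\mathbb{E}[J^\top\nabla_z^2\ell\,J]+\mathbb{E}[R(x,y)]$, where $R(x,y)$ denotes that residual sum.

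For the Gauss--Newton term I will bound the largest eigenvalue through the Rayleigh quotient. For any unit $u\in\mathbb{R}^d$, set $w=Ju$. Then
\begin{equation*}
u^\top J^\top\nabla_z^2\ell\,J u \;=\; w^\top \nabla_z^2\ell\, w \;\le\; \lambda_{\max}(\nabla_z^2\ell)\,\|w\|^2 \;\le\; L_\ell\,\|J\|^2 \;\le\; L_\ell\big(\sup_x\|J_\theta g_\theta(x)\|\big)^2 .
\end{equation*}
Here I use that $L_\ell\ge 0$ for the convex losses of interest, such as cross-entropy or squared loss. Taking the expectation preserves the bound, because the pointwise bound is uniform in $x$. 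Then maximizing over $u$ gives $\lambda_{\max}(\mathbb{E}[J^\top\nabla_z^2\ell J])\le L_\ell(\sup_x\|J_\theta g_\theta\|)^2$. Combined with Lemma~\ref{lem:param-sens}, this produces the depth dependence claimed in Sec.~\ref{sec:depth-curvature}.

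The main obstacle is the residual term $\mathbb{E}[R]$. It is not sign-definite, and it is not controlled by $\|J_\theta g_\theta\|$ alone, since it involves $\nabla_\theta^2 g_\theta$. So the inequality as literally stated really concerns the Gauss--Newton curvature. To close the argument I would take one of two routes. The first is to interpret ``Gauss--Newton bound'' as the statement that $\lambda_{\max}$ of the GGN matrix satisfies the bound, and that $\nabla_\theta^2\mathcal{L}$ is well approximated by it. The second is to add an explicit hypothesis: either $\nabla_z\ell\approx 0$ (near interpolation), or $\lambda_{\max}(\mathbb{E}[R])\le 0$, or an additive bound $\|\mathbb{E}[R]\|\le \sup\|\nabla_z\ell\|\cdot\sup\|\nabla^2_\theta g_\theta\|$. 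The last option would appear as an extra term on the right-hand side. The empirical observation that the GGN dominates the top of the spectrum \citep{sagun2016eigenvalues,ghorbani2019investigation} would be cited to justify the approximation.

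Finally, I would note the preconditioned extension. For diagonal $P_t\succ 0$, the preconditioned Hessian is $G_t=P_t^{-1/2}HP_t^{-1/2}$. The same Rayleigh-quotient argument applied to it yields $\lambda_{\max}(G_t)\le L_\ell\,\|P_t^{-1/2}\|^2(\sup_x\|J_\theta g_\theta\|)^2$. This transfers the depth dependence to the effective curvature that governs adaptive EoS.
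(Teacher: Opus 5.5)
Your decomposition and Gauss--Newton bound are exactly the paper's proof. The paper writes, per sample, $J_\theta g_\theta(x)^\top\nabla_z^2\ell\,J_\theta g_\theta(x)\preceq L_\ell\,J_\theta g_\theta(x)^\top J_\theta g_\theta(x)$, takes the expectation, and reads off $\lambda_{\max}$. Your Rayleigh-quotient version is the same step. You also make explicit the tacit requirement $L_\ell\ge 0$, which is needed to pass from $L_\ell J^\top J$ to $L_\ell\|J\|^2$.

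The paper's proof never mentions the residual $\sum_j(\partial_{z_j}\ell)\,\nabla_\theta^2 g_{\theta,j}(x)$; it treats the Gauss--Newton term as if it were $\nabla_\theta^2\mathcal{L}$. Your objection is therefore a genuine gap in the paper, not in your proposal. In fact the proposition as literally stated is false. Take $g_\theta(x)=\theta_1\theta_2 x$, a single sample $(x,y)=(1,1)$, and $\ell(z)=\tfrac12(z-y)^2$, so $L_\ell=1$. At $\theta=0$ we have $J_\theta g_\theta=(\theta_2,\theta_1)=0$, so the right-hand side is $0$. However, $\nabla_\theta^2\mathcal{L}(0)=\begin{pmatrix}0&-1\\-1&0\end{pmatrix}$, which has $\lambda_{\max}=1$.

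So what you and the paper actually establish is the bound for the GGN matrix $\mathbb{E}[J^\top\nabla_z^2\ell\,J]$. Your repairs are not all equally rigorous:
\begin{itemize}
\item The routes ``$\nabla_\theta^2\mathcal{L}$ is well approximated by the GGN'' and ``$\nabla_z\ell\approx 0$'' are heuristics and yield no inequality.
\item The sign condition $\lambda_{\max}(\mathbb{E}[R])\le 0$, combined with Weyl's inequality $\lambda_{\max}(A+B)\le\lambda_{\max}(A)+\lambda_{\max}(B)$, is rigorous.
\item The additive term $\sup\|\nabla_z\ell\|\,\sup\|\nabla_\theta^2 g_\theta\|$ on the right-hand side is also rigorous, with $\|\nabla_\theta^2 g_\theta\|$ read as the operator norm of the third-order tensor $u\mapsto\nabla_\theta^2 g_\theta[u,u]$.
\end{itemize}

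The same caveat applies to your preconditioned extension. It is rigorous only for $P_t^{-1/2}\,\mathbb{E}[J^\top\nabla_z^2\ell\,J]\,P_t^{-1/2}$, and there it agrees with Corollary~\ref{cor:precond}, since $\|P_t^{-1/2}\|^2=1/m_t$.
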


\begin{proof}
For each $(x,y)$, the Gauss–Newton term is $J_\theta g_\theta(x)^\top \nabla_z^2\ell\, J_\theta
g_\theta(x) \preceq L_\ell\, J_\theta g_\theta(x)^\top J_\theta g_\theta(x)$, hence the stated bound
after taking expectation and the maximum eigenvalue.
\end{proof}

\begin{corollary}[Preconditioned curvature]
\label{cor:precond}
Let $P_t$ be SPD diagonal with $m_t I \preceq P_t \preceq M_t I$.
Then for $G_t=P_t^{-1/2} H P_t^{-1/2}$,
\[
\frac{1}{M_t}\,\lambda_{\max}(H)\ \le\ \lambda_{\max}(G_t)\ \le\ \frac{1}{m_t}\,\lambda_{\max}(H).
\]
Hence the depth dependence of $\lambda_{\max}(G_t)$ mirrors that of $\lambda_{\max}(H)$ up to
constant factors $(m_t,M_t)$.
\end{corollary}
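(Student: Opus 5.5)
The plan is to use the variational (Courant--Fischer) characterization of the largest eigenvalue together with a change of variables through the diagonal preconditioner. Since $P_t$ is SPD, $P_t^{-1/2}$ is well defined, symmetric and invertible, and $G_t=P_t^{-1/2}HP_t^{-1/2}$ is symmetric. So
\[
\lambda_{\max}(G_t)=\max_{y\neq 0}\frac{y^\top P_t^{-1/2}HP_t^{-1/2}y}{y^\top y}.
\]
Substitute $u=P_t^{-1/2}y$, a bijection on $\mathbb{R}^d\setminus\{0\}$. Then $y=P_t^{1/2}u$, $y^\top y=u^\top P_t u$, and the quotient becomes the generalized Rayleigh quotient
\[
\frac{u^\top H u}{u^\top P_t u}=\frac{u^\top H u}{u^\top u}\cdot\frac{u^\top u}{u^\top P_t u}.
\]
The spectral bounds $m_t I\preceq P_t\preceq M_t I$ give $u^\top u/(u^\top P_t u)\in[1/M_t,\,1/m_t]$.

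For the upper bound, I split into cases. If $u^\top Hu\ge 0$, the quotient is at most $\lambda_{\max}(H)/m_t$ once $\lambda_{\max}(H)\ge 0$. If $u^\top Hu<0$, the quotient is negative and hence also at most $\lambda_{\max}(H)/m_t$.

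For the lower bound, I evaluate the quotient at $u=v_{\max}(H)$. This gives
\[
\lambda_{\max}(G_t)\ge \frac{\lambda_{\max}(H)}{v^\top P_t v}\ge \frac{\lambda_{\max}(H)}{M_t},
\]
again using $\lambda_{\max}(H)\ge 0$.

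The main obstacle is the sign of $\lambda_{\max}(H)$. Both scalings fail or reverse if $\lambda_{\max}(H)<0$, i.e.\ if $H$ is negative definite. I will therefore state explicitly that we assume $\lambda_{\max}(H)\ge 0$. This is the relevant regime: it holds automatically near the PSD Gauss--Newton setting of Proposition~\ref{prop:gn-bound}, and at any non-maximum. Under that assumption the two inequalities combine to give the corollary.

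The closing sentence of the corollary then follows by composing with Proposition~\ref{prop:gn-bound} and Lemmas~\ref{lem:depth-lip}--\ref{lem:param-sens}. The depth-dependent bound on $\lambda_{\max}(H)$ transfers to $\lambda_{\max}(G_t)$ up to the factors $1/m_t$ and $1/M_t$.
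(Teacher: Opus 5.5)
Your argument is the paper's: it rewrites $\lambda_{\max}(G_t)$ as the maximum of $u^\top H u$ over the $P_t$-unit sphere, via your substitution $u=P_t^{-1/2}y$, and then applies $m_t\|u\|^2\le u^\top P_t u\le M_t\|u\|^2$. Your explicit assumption $\lambda_{\max}(H)\ge 0$ is genuinely needed and the paper's proof silently omits it (for example, $H=-I$, $P_t=I$, $M_t=2$ violates the lower bound), but your justification ``at any non-maximum'' is wrong, since $H$ can be negative definite at a non-stationary point, so keep it as a stated hypothesis.
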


\begin{proof}
For SPD $P_t$, Rayleigh quotients give
$\lambda_{\max}(G_t)=\max_{\|v\|=1} v^\top P_t^{-1/2} H P_t^{-1/2} v
= \max_{\|u\|_{P_t}=1} u^\top H u$, where $\|u\|_{P_t}^2=u^\top P_t u$.
Using $m_t\|u\|^2\le\|u\|_{P_t}^2\le M_t\|u\|^2$ yields the bounds.
\end{proof}

\begin{corollary}[Stability margin scales as $1/\lambda_{\max}$]
\label{cor:stability}
For a quadratic model of the local dynamics, gradient descent is stable if
$\eta<2/\lambda_{\max}(H)$; with momentum/Adam, the admissible region for $(\eta,\beta)$ scales as
$O(1/\lambda_{\max}(G_t))$. Combining Lemma~\ref{lem:depth-lip}, Lemma~\ref{lem:param-sens}, and
Proposition~\ref{prop:gn-bound} shows that increasing depth $L$ raises $\lambda_{\max}(H)$ (and
$\lambda_{\max}(G_t)$), thereby shrinking the stability margin.
\end{corollary}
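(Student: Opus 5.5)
The statement to prove is the final corollary (Corollary~\ref{cor:stability}): stability margins scale as $1/\lambda_{\max}$, and depth shrinks them. I will split it into two parts. The first is a local linear-stability claim for the optimizer dynamics. The second is a monotonicity-in-depth claim that combines the lemmas already established.

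\textbf{Stability part.} I use the quadratic model $\mathcal{L}(\theta)=\tfrac12\theta^\top H\theta+b^\top\theta$ and diagonalize $H=\sum_i\lambda_i v_iv_i^\top$.

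For plain gradient descent, the error along $v_i$ evolves as $e_{k+1}=(1-\eta\lambda_i)e_k$. This is stable iff $|1-\eta\lambda_i|<1$ for every $i$, i.e.\ iff $\eta<2/\lambda_{\max}(H)$.

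For heavy-ball momentum, I write the two-dimensional companion recursion along each eigendirection, with matrix $\begin{pmatrix}1+\beta-\eta\lambda_i & -\beta\\ 1&0\end{pmatrix}$. The Jury conditions give stability iff $\eta\lambda_i<2(1+\beta)$. The admissible region in $(\eta,\beta)$ is therefore $\{\eta<2(1+\beta)/\lambda_{\max}\}$, which is $O(1/\lambda_{\max})$.

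For Adam, I freeze the preconditioner $P_t$ locally, as in Sec.~\ref{sec:precond-hessian}. The update $\theta\mapsto\theta-\eta P_t^{-1}m$ becomes momentum GD in the coordinates $u=P_t^{1/2}\theta$, where the Hessian is $G_t$. The same analysis then gives a threshold $\eta\lambda_{\max}(G_t)<c(\beta_1)$, with $c(\beta_1)=2(1+\beta_1)/(1-\beta_1)$ for EMA momentum. This holds for $\beta_1=0.9$, consistent with the constant $38$.

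\textbf{Depth part.} I chain the earlier results. Lemma~\ref{lem:depth-lip} and Lemma~\ref{lem:param-sens} give $\sup_x\|J_\theta g_\theta(x)\|\le C\exp\big(\sum_{\ell\le L}\|\partial B_\ell\|\big)$. Proposition~\ref{prop:gn-bound} turns this into a bound on $\lambda_{\max}(H)$, and Corollary~\ref{cor:precond} transfers it to $\lambda_{\max}(G_t)$ up to the factors $m_t,M_t$. Each added block contributes a nonnegative term to the exponent, so the bound is nondecreasing in $L$. Consequently the certified admissible step size, $c/\text{bound}$, is nonincreasing in $L$.

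\textbf{Main obstacle.} An upper bound that increases with $L$ does not imply that $\lambda_{\max}$ itself increases; the paper's own remark concedes this. To make the claim honest, I will state it as shrinking of the \emph{guaranteed} margin. I would also try to add a lower bound in a special case. If a new block with parameters $w$ is appended on top of an existing network, the Gauss--Newton part of the Hessian restricted to the old parameters changes through the factor $J_{L+1}$. The Hessian's compression onto the old parameters is then at least $\lambda_{\min}(J_{L+1}^\top J_{L+1})$ times its previous value. Since $\lambda_{\max}$ of the full Hessian dominates that of any compression, $\lambda_{\max}$ does not decrease when $J_{L+1}$ is non-contracting. The caveat is that this needs the Gauss--Newton term to dominate, so the residual term $\sum\partial_z\ell\,\nabla^2_\theta g$ must be controlled. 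I expect that control to be the hardest step, and I would handle it by assuming near-interpolation or small $\|\nabla_z\ell\|$.
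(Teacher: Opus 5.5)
Your argument is the paper's own. The paper gives no separate proof of this corollary; its justification is the chain Lemma~\ref{lem:depth-lip} $\to$ Lemma~\ref{lem:param-sens} $\to$ Proposition~\ref{prop:gn-bound}, with Corollary~\ref{cor:precond} for $G_t$. The thresholds $2/\lambda_{\max}(H)$ and $\eta\lambda_{\max}(G_t)\approx 38$ are simply taken from the preliminaries and the adaptive-EoS literature. What you add is a self-contained derivation of those thresholds, and it checks out: the Jury conditions handle heavy ball, and the substitution $u=P_t^{1/2}\theta$ turns frozen-preconditioner Adam into heavy ball with step $\eta(1-\beta_1)$ on $G_t$, giving $2(1+\beta_1)/(1-\beta_1)=38$. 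Restating the depth claim as monotonicity of the \emph{certified} margin is exactly what the chain supports, as the Remarks in Sec.~\ref{sec:depth-curvature} concede.

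Two small repairs are needed.
\begin{itemize}
\item Your ``iff'' conditions also need $\eta\lambda_i>0$, i.e.\ $H\succ 0$ (resp.\ $G_t\succ 0$). With a negative eigenvalue, no $\eta>0$ is stable.
\item ``Each added block contributes a nonnegative term to the exponent'' is literally true only for a block appended on top. Unlocking a block in the middle moves the downstream points $x_{\ell-1}$ at which $\partial B_\ell$ is evaluated. So put $\sup_x\|\partial B_\ell(x)\|$ in the exponent, and make the $L$-dependence hidden in $C$ explicit.
\end{itemize}

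Your optional lower-bound sketch fails one step before the obstacle you flag. You claim the Gauss--Newton compression onto the old parameters is at least $\lambda_{\min}(J_{L+1}^\top J_{L+1})$ times its previous value. That holds when $\nabla_z^2\ell$ is a multiple of the identity (squared loss), but not in general. Take $\nabla_z^2\ell=e_1e_1^\top$, a single old parameter with $J_{\mathrm{old}}=e_1$, and $J_{L+1}$ the swap of $e_1,e_2$, which is orthogonal and hence non-contracting. The compression then drops from $1$ to $0$, and even a small rotation by $\varphi$ gives $\cos^2\varphi<1$. For cross-entropy, $\nabla_z^2\ell=\mathrm{diag}(p)-pp^\top$ is anisotropic and is also evaluated at $g_{\mathrm{new}}(x)\neq g_{\mathrm{old}}(x)$. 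So the special case needs an isotropy-type assumption on the output curvature, in addition to Gauss--Newton dominance.

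The residual term $\sum_k \partial_{z_k}\ell\,\nabla_\theta^2 g_k$ is also not specific to your lower bound. The paper's proof of Proposition~\ref{prop:gn-bound} bounds only the Gauss--Newton matrix. The upper-bound chain that you and the paper rely on therefore tacitly assumes the same dominance.
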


\section{Self-Stabilization of Curvature During Training}
\label{sec:exp-self-stab}

\subsection{Curvature as training progresses}

We track the raw curvature $\lambda_{\max}(H(\theta_t))$, the \emph{effective} (preconditioned)
curvature $\lambda_{\max}(G_t)$, and the preconditioner inverse $P_t^{-1}$ for a 16-layer (1B) model.
As shown in Fig.~\ref{fig:second_order}, after an initial transient the dynamics enter a regime where
$\lambda_{\max}(G_t)$ oscillates within a narrow band, consistent with the edge-of-stability picture.
This self-stabilization supports increasing depth later in training, once curvature has settled.
Notably, $P_t^{-1}$ continues to grow over time, indicating a steadily strengthening preconditioning
effect from the optimizer.

\begin{figure}[H]

    \centering
    
    \begin{subfigure}{0.3\textwidth}
        \includegraphics[width=\linewidth]{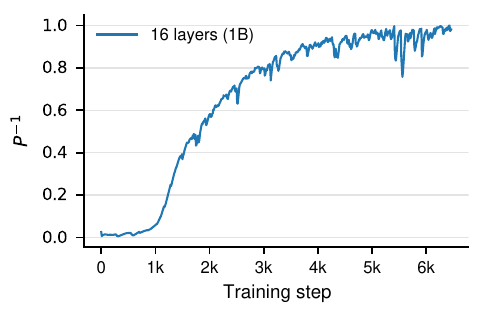}
        \caption{ $P^{-1}$}
    \end{subfigure}
    \hfill
    \begin{subfigure}{0.3\textwidth}
        \includegraphics[width=\linewidth]{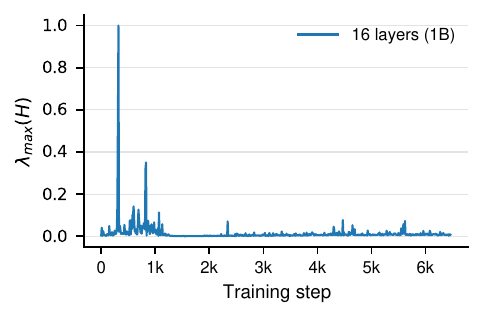}
        \caption{$\lambda_{\text{max}}(H)$ }
    \end{subfigure}
    \hfill
    \begin{subfigure}{0.3\textwidth}
        \includegraphics[width=\linewidth]{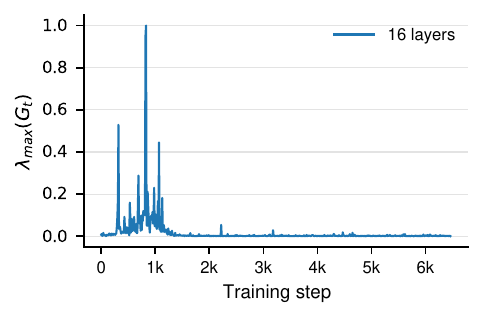}
        \caption{$\lambda_{\text{max}}(G_t)$ }
    \end{subfigure}

    \caption{\textbf{Self stabilization of transformers.} After an initial transient period, both the raw curvature and the preconditioned curvature settles down to a stable, lower band. This supports the effectiveness of increasing the depth later in the training. Interestingly, $P^{-1}$ keeps growing as the training progresses (shown values are normalized).}
    \label{fig:second_order}
\end{figure}

\subsection{Self stabilization of curvature during training}

We observe a self-stabilization effect in Transformers (see Fig.~\ref{fig:selfstab}): when curvature rises, the optimizer’s
preconditioner counteracts it. In Adam-like methods with $P_t=\operatorname{diag}(\sqrt{v_t}+\varepsilon)$
and $v_{t+1}=\beta_2 v_t+(1-\beta_2)g_t^{\odot 2}$, larger curvature typically coincides with larger
gradients $g_t$, which increases $v_t$ and thus $P_t$. Since the effective curvature is
$G_t=P_t^{-1/2} H P_t^{-1/2}$, a larger $P_t$ (smaller $P_t^{-1}$) reduces Rayleigh quotients and
damps $\lambda_{\max}(G_t)$, yielding a stabilizing feedback. See below for a formal discussion.

\paragraph{Self-stabilization via adaptive preconditioning (and its limits).}
Let $H_t\!\coloneqq\!H(\theta_t)$ and consider an Adam-like update
$\theta_{t+1}=\theta_t-\eta\,M_t^{-1}\,\widehat m_t$ with diagonal preconditioner
$M_t=\operatorname{diag}(\sqrt{v_t}+\varepsilon)$, where
\[
v_{t+1}=\beta_2 v_t + (1-\beta_2)\,g_t^{\odot 2},\qquad g_t=\nabla\mathcal{L}(\theta_t).
\]
The \emph{effective} (preconditioned) curvature experienced by the optimizer is
\[
G_t \;=\; M_t^{-1/2}\,H_t\,M_t^{-1/2},\qquad
\lambda_{\max}(G_t)=\max_{\|x\|=1} x^\top M_t^{-1/2}H_t M_t^{-1/2}x.
\]
When curvature or gradient energy surges, $g_t^{\odot 2}$ increases and (after EMA smoothing)
$M_{t+1}\!\uparrow$; consequently $M_{t+1}^{-1/2}\!\downarrow$ and \emph{all} Rayleigh quotients of
$G_{t+1}$ decrease. A simple bound follows from the Rayleigh quotient and diagonal ordering:
\[
\lambda_{\max}(G_{t+1})
\;\le\; \|M_{t+1}^{-1/2}\|_2^{\,2}\,\lambda_{\max}(H_{t+1})
\;=\; \frac{\lambda_{\max}(H_{t+1})}{\min_i\,(\sqrt{v_{t+1,i}}+\varepsilon)^{2}}.
\]
Thus, as $v_{t+1}$ grows, the preconditioner shrinks the effective curvature, exhibiting an
\emph{implicit self-stabilization} that nudges the product $\eta\,\lambda_{\max}(G_t)$ toward the
stability band (the EoS).

\textit{However, this is not sufficient to ensure stable convergence.} Despite this automatic balancing, there are three failure modes:
(i) \textbf{lag:} $v_t$ reacts on a time scale $\sim\!\tfrac{1}{1-\beta_2}$ steps, so sharp,
step-scale spikes in $H_t$ can push $\eta\,\lambda_{\max}(G_t)$ beyond the boundary before $M_t$
catches up; (ii) \textbf{anisotropy:} $M_t$ is diagonal, whereas $H_t$ can be highly anisotropic; a
coordinate-wise preconditioner cannot instantly suppress a sharp \emph{direction} that is a dense
combination of coordinates; (iii) \textbf{coupling with momentum:} with $\beta_1>0$, a large $m_t$
can overshoot even if $M_t$ starts to grow, transiently amplifying the update.


\begin{figure}[t]
  \centering

  \begin{subfigure}[t]{0.32\linewidth}
    \centering
    \includegraphics[width=\linewidth]{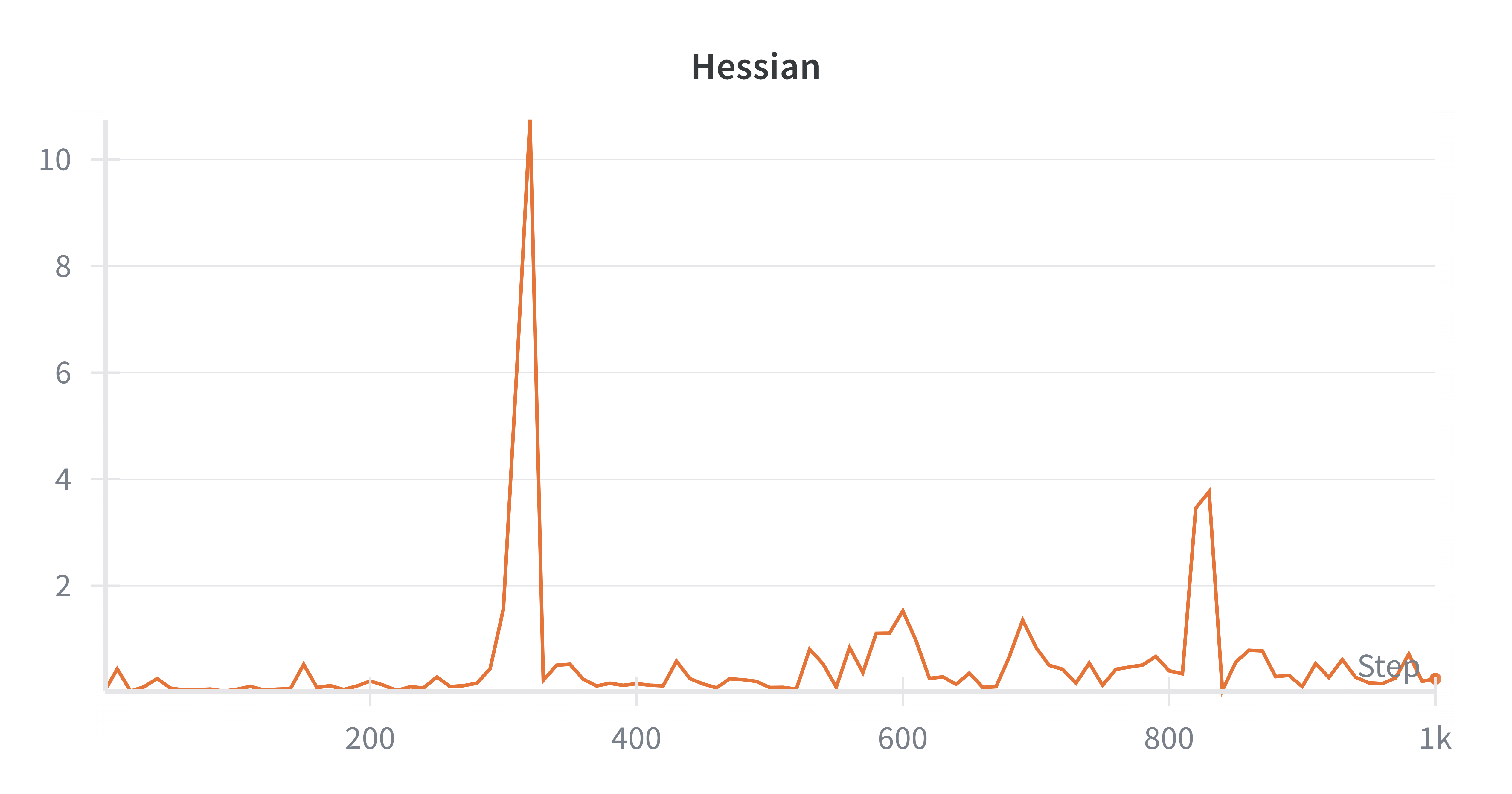}
  \end{subfigure}\hfill
  \begin{subfigure}[t]{0.32\linewidth}
    \centering
    \includegraphics[width=\linewidth]{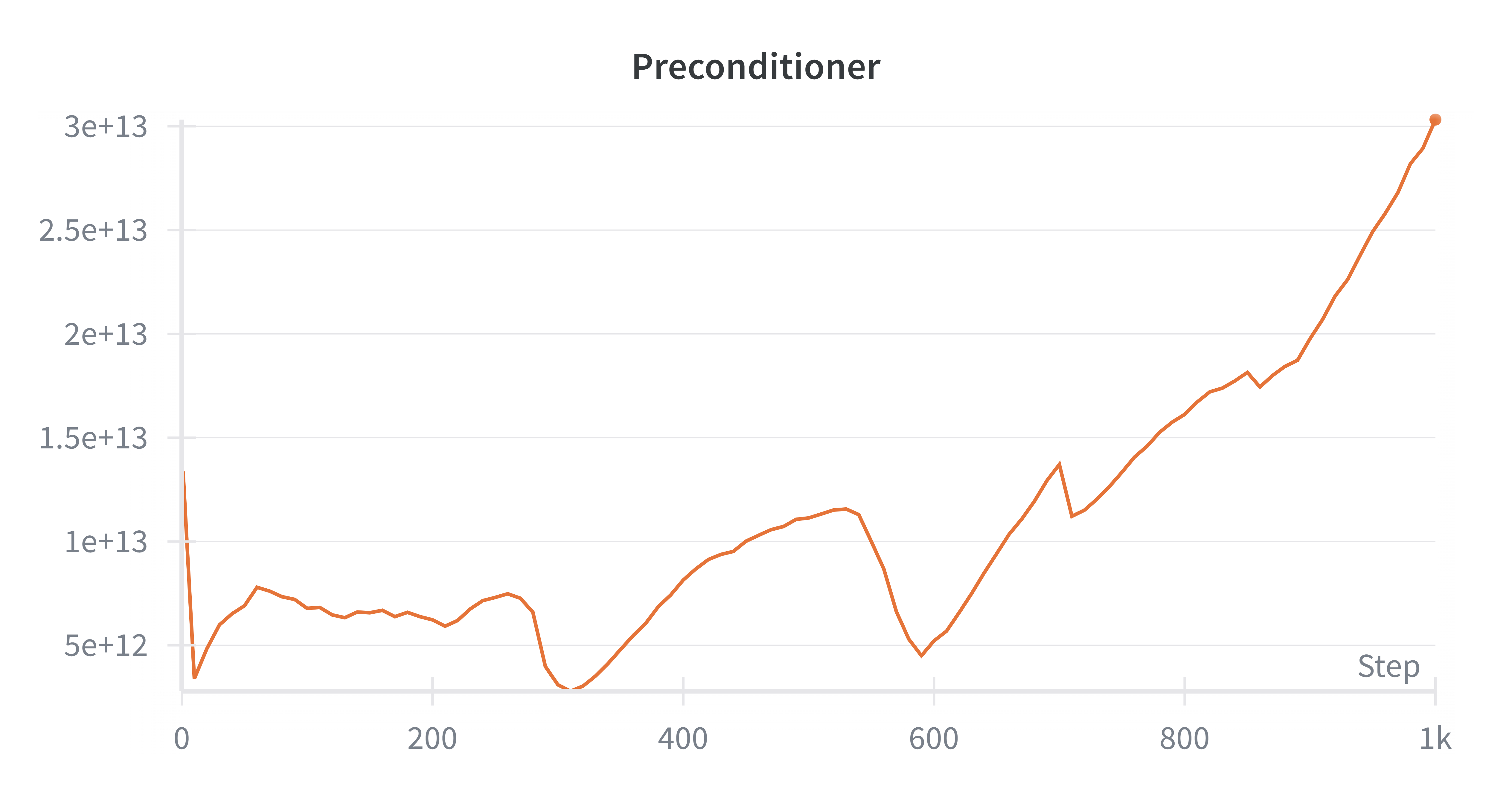}
  \end{subfigure}\hfill
  \begin{subfigure}[t]{0.32\linewidth}
    \centering
    \includegraphics[width=\linewidth]{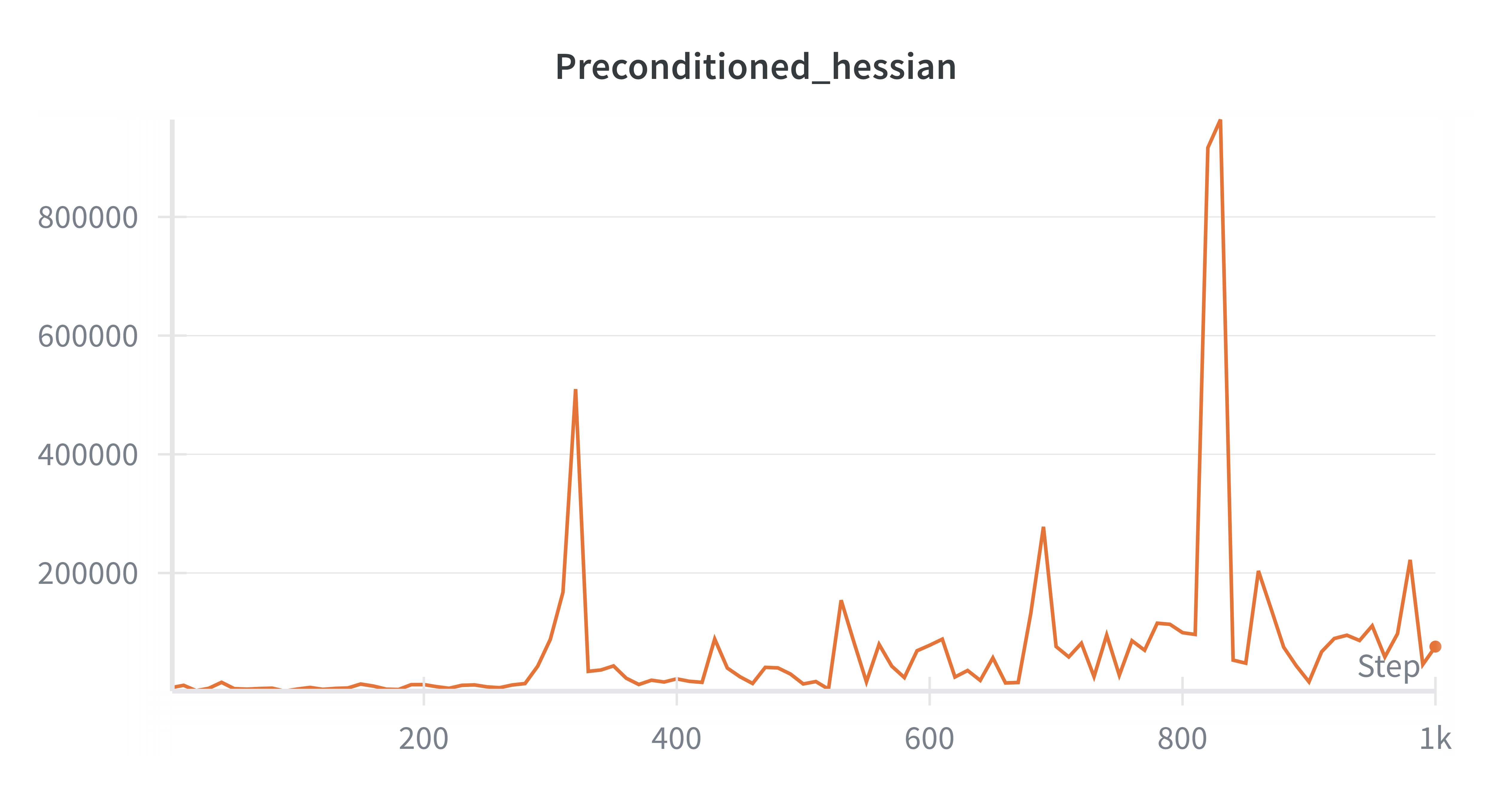}
  \end{subfigure}

  \vspace{0.6em}

  \begin{subfigure}[t]{0.32\linewidth}
    \centering
    \includegraphics[width=\linewidth]{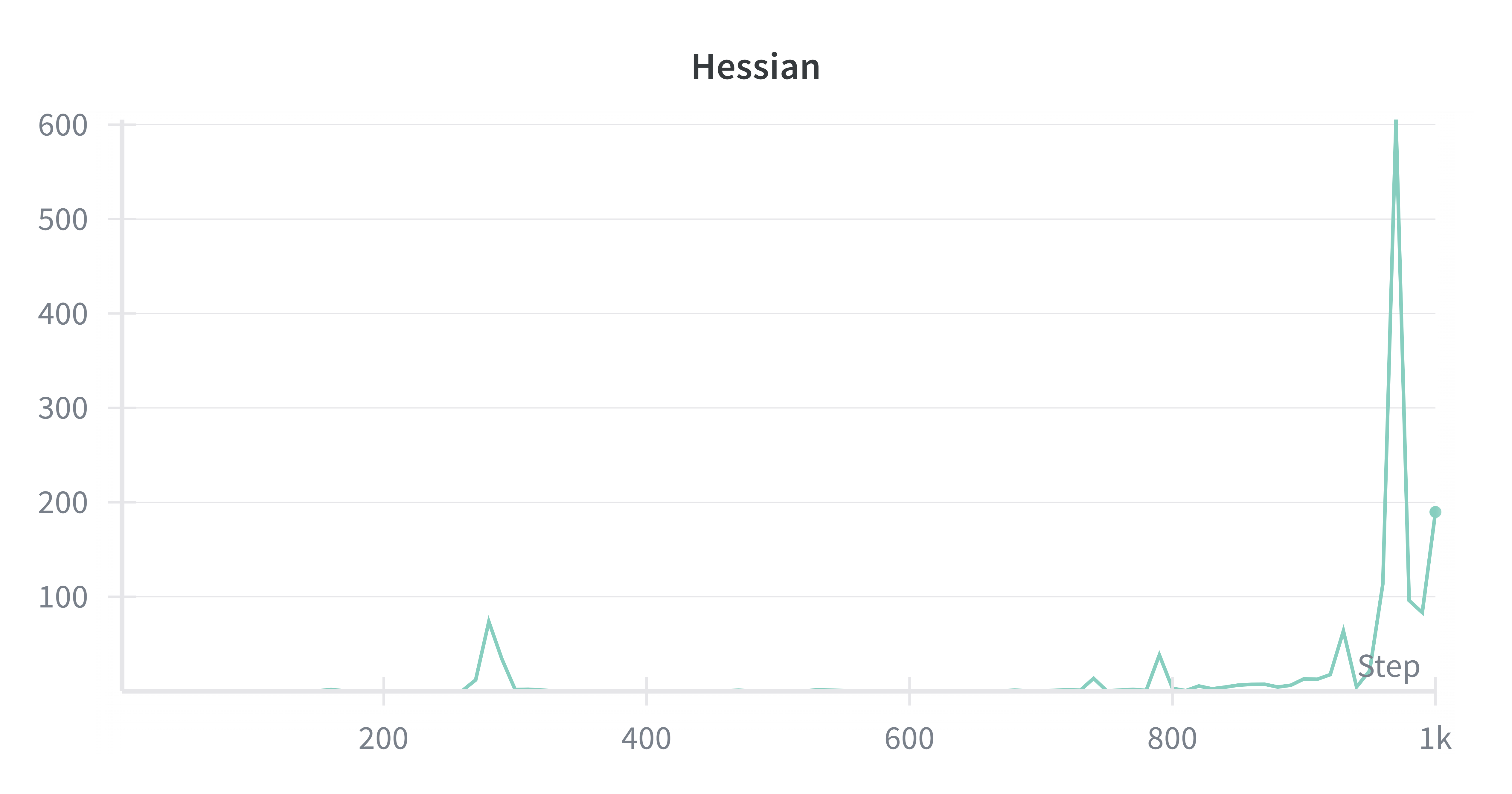}
  \end{subfigure}\hfill
  \begin{subfigure}[t]{0.32\linewidth}
    \centering
    \includegraphics[width=\linewidth]{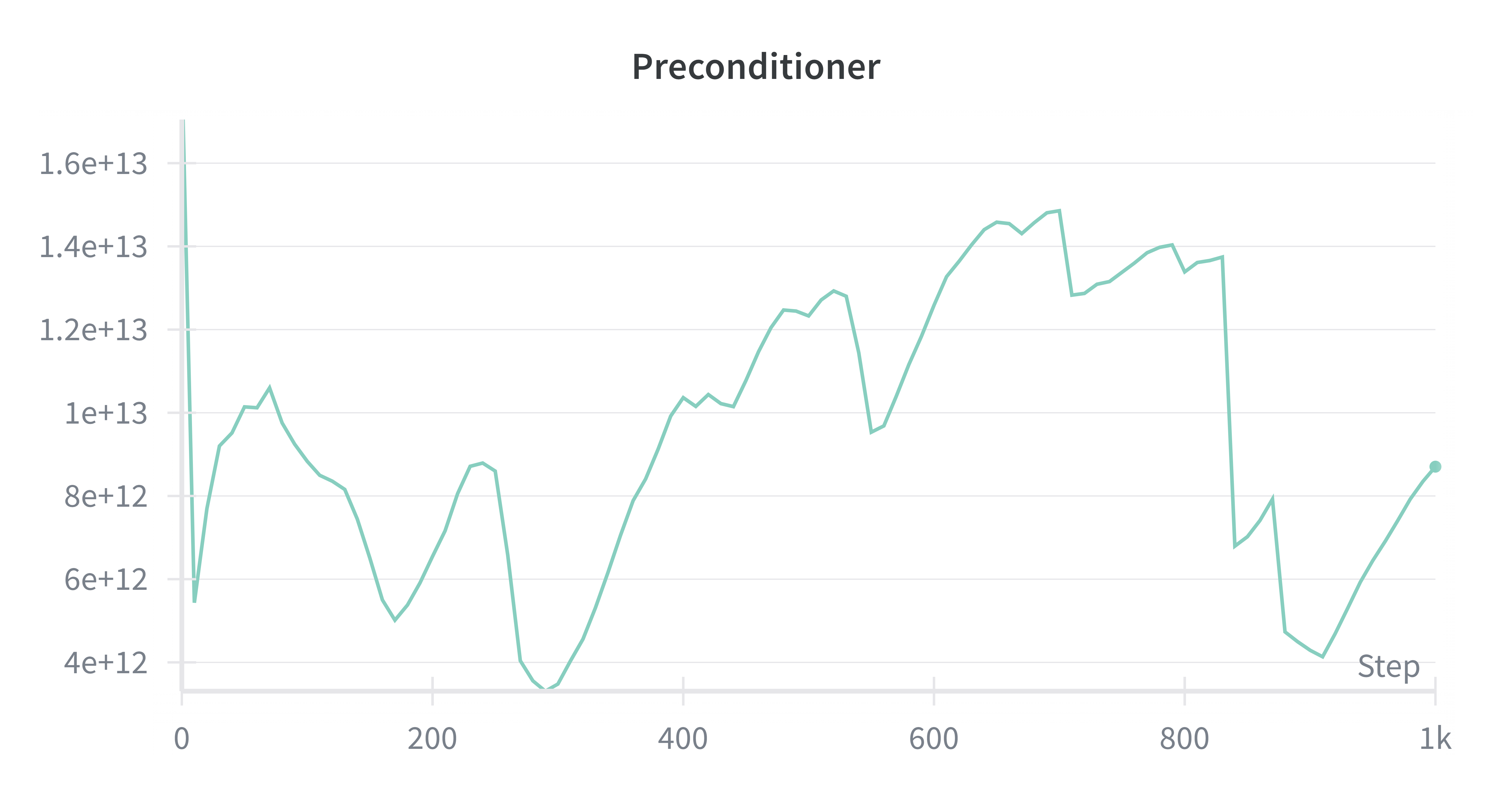}
  \end{subfigure}\hfill
  \begin{subfigure}[t]{0.32\linewidth}
    \centering
    \includegraphics[width=\linewidth]{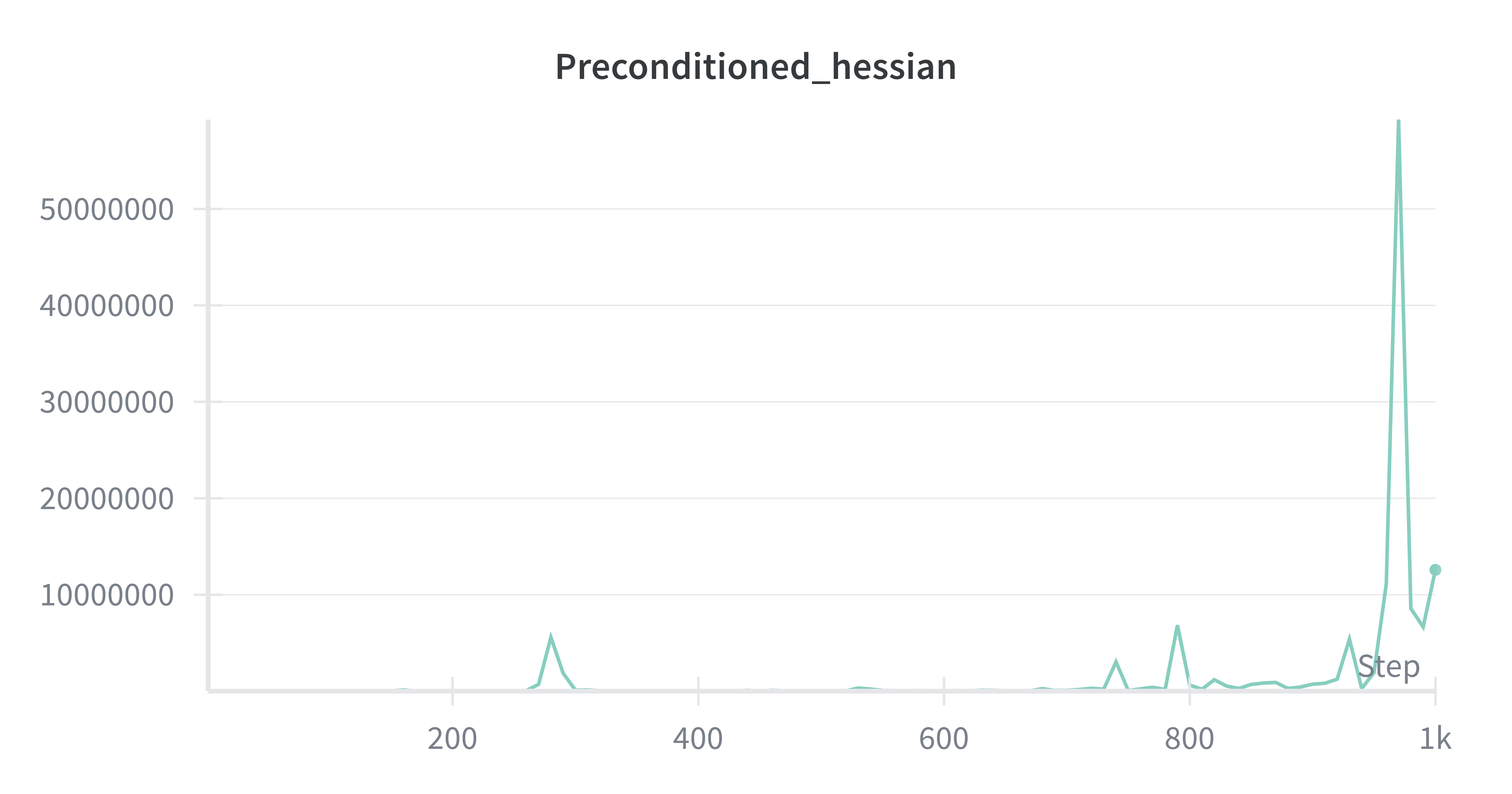}
  \end{subfigure}

  \vspace{0.6em}

  \begin{subfigure}[t]{0.32\linewidth}
    \centering
    \includegraphics[width=\linewidth]{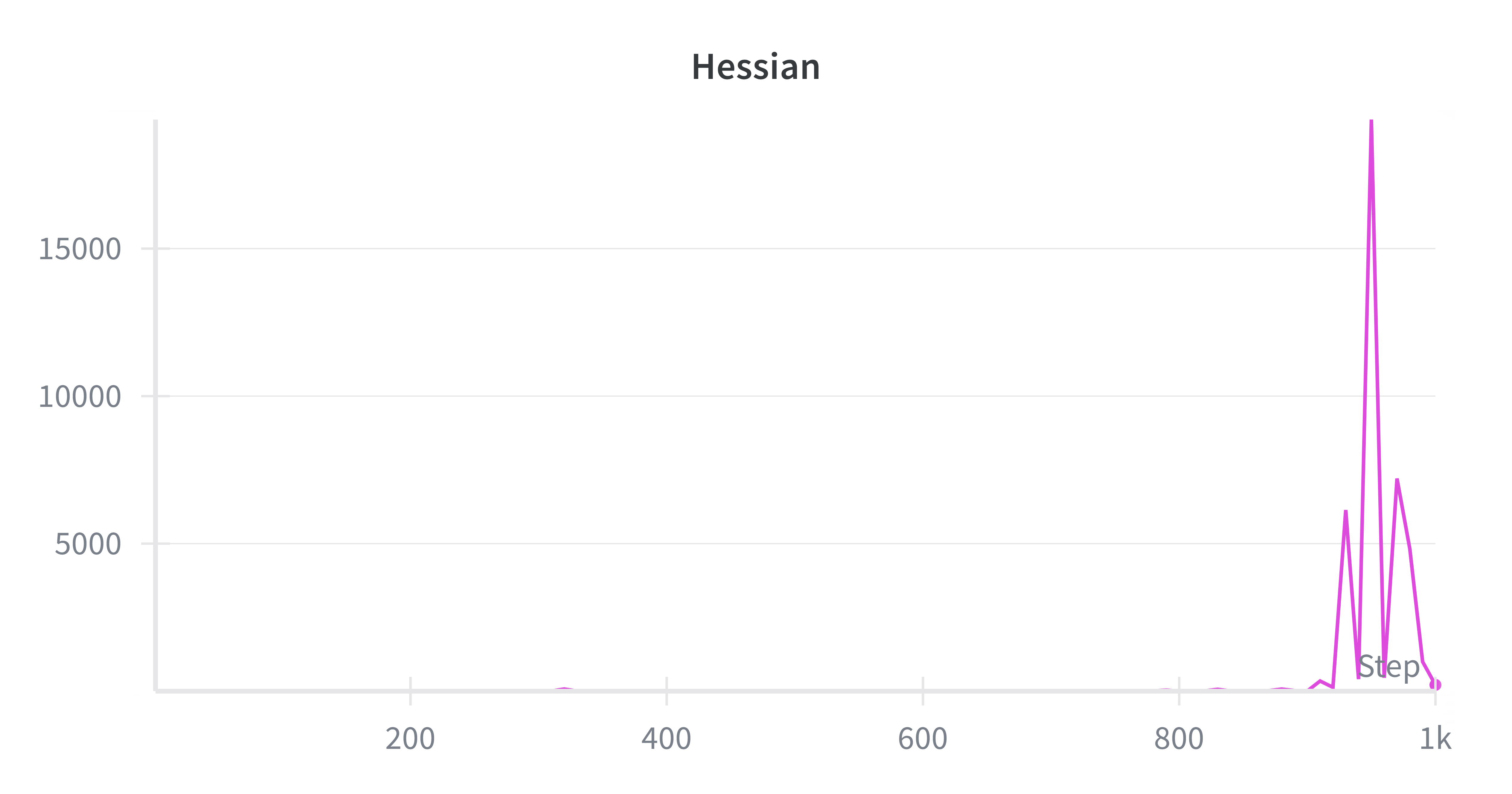}
  \end{subfigure}\hfill
  \begin{subfigure}[t]{0.32\linewidth}
    \centering
    \includegraphics[width=\linewidth]{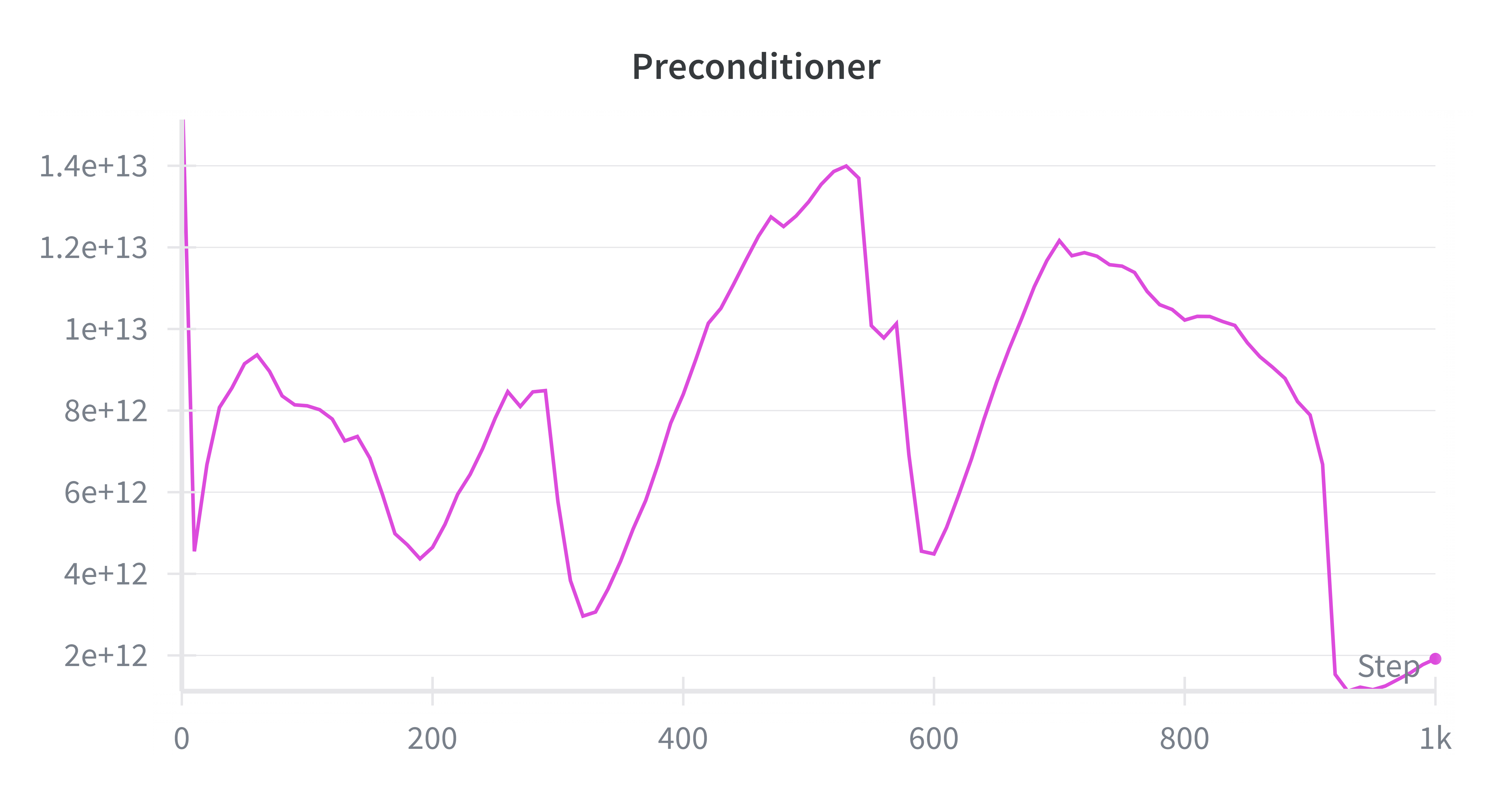}
  \end{subfigure}\hfill
  \begin{subfigure}[t]{0.32\linewidth}
    \centering
    \includegraphics[width=\linewidth]{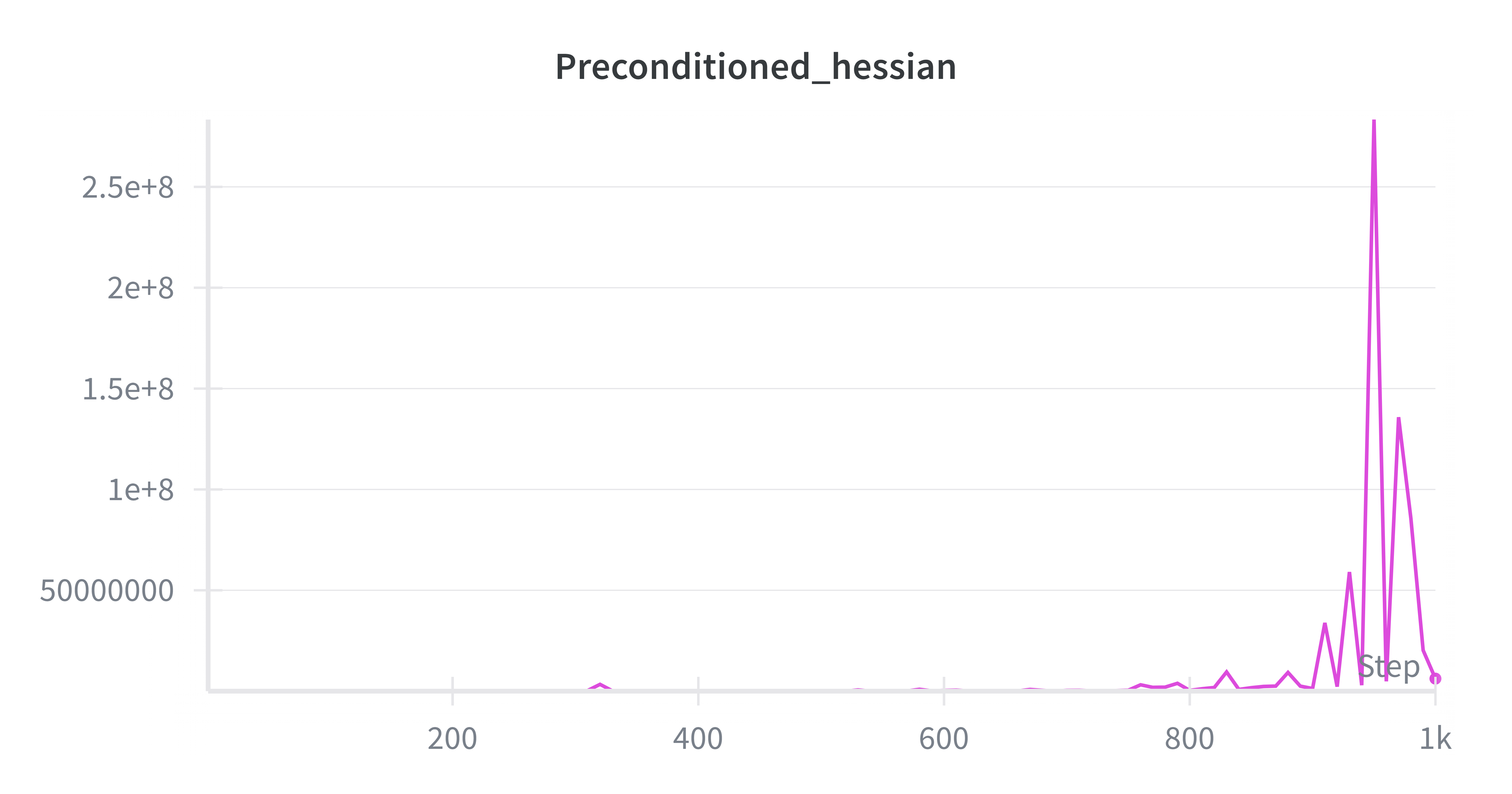}
  \end{subfigure}

  \caption{\textbf{Self-stabilization in Transformers.}
Close-ups over particular training windows of second- and first-order statistics for 8-, 16-, and 32-layer models.
Each row (left$\to$right) shows $\lambda_{\max}(H)$, $P^{-1}$, and $\lambda_{\max}(G_t)$.
When the Hessian spikes, the preconditioner dips, reducing $\lambda_{\max}(G_t)$ and partially
stabilizing the effective curvature; however, this feedback is not always sufficient to keep
$\lambda_{\max}(G_t)$ within the stability threshold.}
  \label{fig:selfstab}
\end{figure}


In summary, adaptive methods do provide a \emph{reactive} stabilizer, $M_t^{-1}$ tends to
drop as curvature rises, reducing $\lambda_{\max}(G_t)$, but this mechanism is imperfect under
fast spikes, strong anisotropy, or momentum coupling. Our \emph{architecture warm-up} complements
this by acting \emph{proactively} on $H_t$ itself: by keeping depth (and hence the operator norm of
intermediate Jacobians) low early and unlocking blocks later in training, we keep the system inside the stability envelope even when the
optimizer’s preconditioner has not yet adapted.

\section{Validation at compute optimal}
\label{app:compute_optimal}
Since \emph{QK-Norm} was the strongest baseline in our shorter FineWeb runs,
we benchmark it at the \emph{Chinchilla} compute-optimal setting \citep{hoffmann2022chinchilla}:
a 1B-parameter, 16-layer model trained for 25B tokens on FineWeb. This
experiment tests whether early-phase gains from Arch-Warmup persist at compute-optimal
budget. As Table~\ref{tab:ppl-fineweb-chinchilla} shows, Arch-Warmup outperforms QK-Norm at this scale,
indicating stronger convergence and stability that carry through to the compute-optimal regime

\section{Sensitivity to the warm up schedule}.
\label{app:warmup}

By default, we keep the model at half depth until learning-rate warmup completes, then unlock the
remaining layers in four groups spaced by 500 training iterations to reach full depth. Performance,
however, is not sensitive to this spacing: Fig.~\ref{fig:ablation} shows that even with \emph{zero}
spacing (i.e., unlocking all remaining layers at once at peak learning rate), results are similar.
The reason is that newly enabled blocks are \emph{zero-initialized}, so their contribution to the
Jacobian/Hessian—and thus the total (preconditioned) curvature—grows gradually from the half-depth
baseline. The model therefore never encounters a sudden “full-depth” curvature jump at the unlock
step; instead, capacity and curvature are realized progressively as those weights move away from zero.

\begin{figure}
    \centering
    \includegraphics[width=0.5\linewidth]{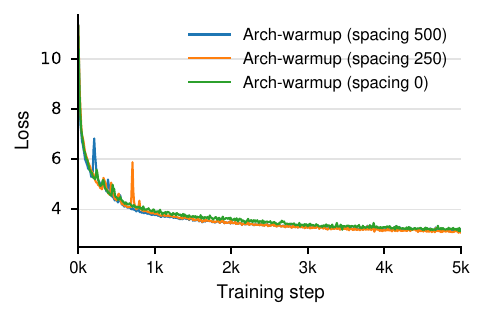}
    \caption{Convergence with different architecture warm-up schedules. We observe that our method is not highly sensitive to the schedule.}
    \label{fig:ablation}
\end{figure}


\begin{table}[t]
  \centering
  \caption{Validation perplexity (PPL) on FineWeb at Chinchilla compute-optimal (1B, 16 layers, 25B tokens). Lower is better.}
  \label{tab:ppl-fineweb-chinchilla}
  \begin{tabular}{lc}
    \toprule
    \textbf{Method} & \textbf{Val PPL} $\downarrow$ \\
    \midrule
    QK-Norm        & 20.28 \\
     Softcap & 32.44 \\
    Arch-Warmup    & \textbf{18.35} \\
    \bottomrule
  \end{tabular}
\end{table}

\begin{figure}
    \centering
    \includegraphics[width=0.5\linewidth]{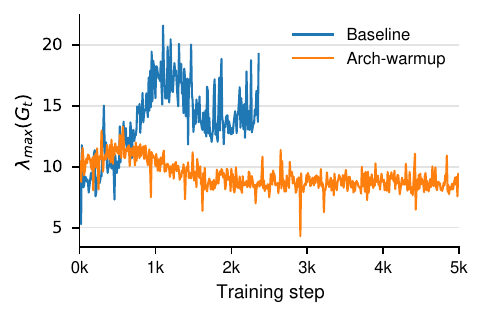}
    \caption{Log-scale plot for the evolution of $\lambda_{\text{max}}(G_t)$ with a peak learning rate of $8 \times 10^{-3}$.}
    \label{fig:lr-hessian-logscale}
\end{figure}

\end{document}